\documentclass[a4paper, 10pt]{article}

\usepackage[a4paper,left=3cm,right=3cm,top=2.5cm,bottom=2.5cm]{geometry}
\usepackage{hyperref}
\usepackage{subfig}
\usepackage{pgfplots}
\usepackage{pgfplotstable}
\usepackage{mathtools}
\usepackage{multicol}
\usepackage{comment}
\usepackage{booktabs}
\pgfplotsset{compat=1.5}
\usepackage{amssymb}
\usepackage{url}
\usepackage{bm}

\usepackage{pdfsync}
\usepackage{float}
\usepackage{tabularx}
\usepackage{enumerate}
\usepackage{array}
\usepackage{xspace}
\usepackage{tikz}
\usepackage{tikz-cd}
\usepackage{tikzsymbols}
\usetikzlibrary{calc,trees,positioning,arrows,chains,shapes.geometric,%
    decorations.pathreplacing,decorations.pathmorphing,shapes,%
    matrix,shapes.symbols, decorations.markings, patterns,fit}

\usepackage{siunitx}

\usepackage{authblk}

\usepackage[draft,inline,marginclue]{fixme}
\usepackage{mathrsfs}
\usepackage{color, colortbl}
\usepackage{multirow}

\usepackage{amsthm}
\usepackage{amsmath}
\usepackage{stmaryrd}
\usepackage{algorithm,algpseudocode}
\usepackage{graphicx}
\usepackage{booktabs}

\usepackage[noabbrev]{cleveref}

\FXRegisterAuthor{mt}{amt}{MT} \FXRegisterAuthor{fr}{afr}{FR}

\newtheorem{theorem}{Theorem}

\newtheorem{corollary}{Corollary}
\newtheorem{definition}{Definition}
\newtheorem{problem}{Problem}
\theoremstyle{definition}
\newtheorem{rmk}{Remark}
\newtheorem{ass}{Assumption}

\newcommand{\mupar}{\ensuremath{\boldsymbol{\mu}}}

\newcommand{\XX}{\ensuremath{\mathbf{X}}}
\newcommand{\YY}{\ensuremath{\mathbf{Y}}}
\newcommand{\ZZ}{\ensuremath{\mathbf{Z}}}
\newcommand{\RR}{\ensuremath{\mathbb{R}}}
\newcommand{\xx}{\ensuremath{\mathbf{x}}}
\newcommand{\yy}{\ensuremath{\mathbf{y}}}

\DeclareMathOperator*{\argmin}{\arg\!\min}

\newcolumntype{C}[1]{>{\centering\arraybackslash}m{#1}}

\definecolor{Gray}{gray}{0.9}

\crefname{theorem}{theorem}{theorems}
\crefname{problem}{problem}{problems}
\crefname{rmk}{remark}{remarks}
\crefname{lemma}{lemma}{lemmas}
\crefname{ass}{assumption}{assumptions}

\begin{document}

\title{A local approach to parameter space reduction for regression and
  classification tasks}

\author[]{Francesco~Romor\footnote{francesco.romor@sissa.it}}
\author[]{Marco~Tezzele\footnote{marco.tezzele@sissa.it}}
\author[]{Gianluigi~Rozza\footnote{gianluigi.rozza@sissa.it}}

\affil{Mathematics Area, mathLab, SISSA, via Bonomea 265, I-34136 Trieste,
  Italy}

\maketitle

\begin{abstract}
Parameter space reduction has been proved to be a crucial
  tool to speed-up the execution of many numerical tasks such as
  optimization, inverse problems, sensitivity analysis, and surrogate
  models' design, especially when in presence of high-dimensional
  parametrized systems. 
In this work we propose a new method called local active subspaces
(LAS), which explores the synergies of active subspaces with
supervised clustering techniques in order to carry out a more efficient
dimension reduction in the parameter space. The clustering is
performed without losing the input-output relations by introducing a
distance metric induced by the global active subspace. 
We present two possible clustering algorithms: K-medoids and a
hierarchical top-down approach, which is able to impose a variety of
subdivision criteria specifically tailored for parameter space reduction
tasks.  
This method is particularly useful for the community working on
surrogate modelling. Frequently, the parameter space presents subdomains
where the objective function of interest varies less on average along different directions. So, it could be approximated more accurately if restricted to
those subdomains and studied separately. 
We tested the new method over several numerical experiments of
increasing complexity, we show how to deal with vectorial outputs, and
how to classify the different regions with respect to the local active subspace dimension. 
Employing this classification technique as a preprocessing step in the parameter space,
or output space in case of vectorial outputs, brings remarkable
results for the purpose of surrogate modelling.
\end{abstract}

\tableofcontents
\listoffixmes

\section{Introduction}
\label{sec:intro}
Parameter space reduction~\cite{constantine2015active,
  tezzele2022aroma16} is a rapidly growing field of interest which
plays a key role in fighting the curse of dimensionality. The need of reducing the
number of design inputs is particularly important in engineering for advanced
CFD simulations to model complex phenomena, especially in the broader context of
model order reduction~\cite{chinestaenc2017, chinesta2011,
  beohparour17, morhandbook2020} and industrial numerical
pipelines~\cite{brunton2021data, brunton2019data, rozza2018advances}.

  Active subspaces~\cite{constantine2015active} is one of the most used techniques
  for linear reduction in input spaces. It has been proved useful in many
  numerical tasks such as regression, using a multi-fidelity data fusion approach
  with a surrogate model built on top of the AS as low-fidelity
  model~\cite{romor2023multi}, shape
  optimization~\cite{lukaczyk2014active, tezzele2018dimension, boncoraglio2020model} and a
  coupling with the genetic algorithm to enhance its
  performance~\cite{demo2020asga, demo2021hull}, inverse
  problems~\cite{zahm2022certified}, and uncertainty
  quantification~\cite{cortesi2020forward}. It has also been used to
  enhance classical model order reduction techniques such as
  POD-Galerkin~\cite{tezzele2018combined}, and POD with
  interpolation~\cite{demo2019cras, tezzele2023multi}. Other attempts towards nonlinear parameter
  space reduction have been proposed recently: kernel-based active
  subspaces~\cite{romor2022kas}, nonlinear level-set
  learning~\cite{zhang2019learning}, and active manifold~\cite{bridges2019active}
  are the most promising. In~\cite{chen2019hessian}, instead, they
  project the input parameters onto a low-dimensional subspace spanned by the
  eigenvectors of the Hessian corresponding to its dominating
  eigenvalues.

In this work we propose a new local approach for parameter space dimensionality
reduction for both regression and classification tasks, called Local Active
Subspaces (LAS).
In our work we do not simply apply a clustering technique to
preprocess the input data, we propose a supervised metric induced by the
presence of a global active subspace. The directions individuated by local
active subspaces are locally linear, and they better capture the latent manifold
of the target function. 

From a wider point of view, there is an analogy between local parameter space
reduction and local model order reduction. With the latter, we mean both a
spatial domain decomposition approach for model order reduction of parametric
PDEs in a spatial domain $\Omega\subset\mathbb{R}^{d}$ and a local reduction
approach in the parameter space. As representatives methods for the first paradigm
we report the reduced basis element method~\cite{lovgren2006reduced}, which
combines the reduced basis method in each subdomains with a mortar type method
at the interfaces, and more in general domain decomposition methods applied to model order reduction. For the second approach we cite the interpolation method in
the Grassmannian manifold of the reduced
subspaces~\cite{amsallem2008interpolation}; in particular in
\cite{daniel2020model} the K-medoids clustering algorithm with Grassmann metric
is applied to the discrete Grassmann manifold of the training snapshots as a
step to perform local model order reduction. With this work we 
fill the gap in the literature regarding localization methods in the
context of parameter space reduction.

Other methods have been developed in the last years exploiting
the localization idea. We mention localized slice inverse regression
(LSIR)~\cite{wu2009localized} which uses local information of the slices for
supervised regression and semi-supervised classification. LSIR improves local
discriminant information~\cite{hastie1996discriminant} and local Fischer
discriminant analysis~\cite{sugiyama2007dimensionality} with more efficient
computations for classification problems. The main difference between slice
inverse regression (SIR)~\cite{li1991sliced} and AS is in the construction of
the projection matrix. While SIR needs the elliptic assumption, AS exploits the
gradients of the function of interest with respect to the input parameters.
Recently, a new work on the subject was
disclosed~\cite{xiong2020clustered}. Here we emphasize the differences
and the original contributions of our work: 
1) we implemented
hierarchical top-down clustering applying K-medoids 
with a new metric that includes the gradient information through the active
subspace. In~\cite{xiong2020clustered} they employed hierarchical bottom-up
clustering with unweighted average linkage and a distance obtained as a
weighted sum of the Euclidean distance of the inputs and the cosine of the
angle between the corresponding gradients;
2) we included for vector-valued objective
functions and answered questions about the employment of the
new method to decrease the ridge approximation error with respect to a
global approach;
3) we also focused on classification
algorithms and we devised a method to classify the inputs based on the local
active subspace dimension with different techniques, including the use of
the Grassmannian metric;
4) our benchmarks include vector-valued objective functions from
computational fluid dynamics. We also show that clustering the outputs with
our classification algorithms as pre-processing step leads to more efficient
surrogate models.

This work is organized as follows: in \cref{sec:as} we briefly review the active
subspaces method, in \cref{sec:local} we introduce the clustering algorithms
used and the supervised distance metric based on the presence of a global active
subspace, focusing on the construction of response surfaces and providing
theoretical considerations. In \cref{sec:classification} we present the algorithms to
exploit LAS for classification. We provide extensive numerical results in
\cref{sec:results} from simple illustrative bidimensional dataset to
high-dimensional scalar and vector-valued functions. Finally, in
\cref{sec:the_end} we draw conclusions and future perspectives.

\section{Active subspaces for parameter space reduction}
\label{sec:as}
Active subspaces (AS)~\cite{constantine2015active} are usually employed as dimension reduction method to
unveil a lower dimensional structure of a function of interest $f$, or provide a global sensitivity measure not necessarily aligned with the coordinate axes~\cite{sullivan2015introduction}. Through
spectral considerations about the second moment matrix of $f$, the AS
identify a set of linear combinations of the input parameters along which $f$
varies the most on average.

We make some general assumptions about the inputs and function of interest~\cite{constantine2015active,zahm2020gradient, sullivan2015introduction}. Let us introduce the inputs as an absolutely continuous random vector $\XX$ with values in $\RR^n$ and probability distribution $\boldsymbol{\mu}$. We represent with $\mathcal{X} \subset \RR^n$ the support of $\boldsymbol{\mu}$ and as such our parameter space. We want to compute the active subspace of a real-valued
function $f:(\mathcal{X}, \mathcal{B}(\RR^n), \boldsymbol{\mu})\rightarrow\RR$, where $\mathcal{B}(\RR^n)$ is the Borel $\sigma$-algebra of $\mathbb{R}^n$. We denote with $\mathbf{x}\in\mathcal{X}$ an element in the space of parameters and with $\{\mathbf{x}_i\}_i$ a set of realizations of $\XX$.

An extension to vector-valued functions has been
presented in~\cite{zahm2020gradient} and extended for kernel-based AS
in~\cite{romor2022kas}. Even if in this section we focus only on scalar
functions, the following considerations can be carried over to the multivariate
case without too much effort.

Let $\Sigma$ be the second moment matrix of $\nabla f$ defined as
\begin{equation}
\label{eq:cov_matrix}
\Sigma := \mathbb{E}_{\boldsymbol{\mu}}\, [\nabla_{\xx} f \, \nabla_{\xx} f
^T] =\int (\nabla_{\xx} f) ( \nabla_{\xx} f )^T\, d \mupar,
\end{equation}
where $\mathbb{E}_{\boldsymbol{\mu}}$ denotes the expected value with respect to $\boldsymbol{\mu}$, and $\nabla_{\xx} f = \nabla f(\xx) =
\left [ \frac{\partial f}{\partial \xx_1}, \dots, \frac{\partial f}{\partial
\xx_n} \right ]^T$ is the column vector of partial derivatives of $f$. Its
real eigenvalue decomposition reads $\Sigma = \mathbf{W} \Lambda \mathbf{W}^T$.
We can retain the most energetic eigenpairs by looking at the spectral
decay of the matrix $\Sigma$. The number $r$ of eigenpairs we select is the
active subspace dimension, and the span of the corresponding eigenvectors
defines the active subspace. The partition is the following
\begin{equation}
\mathbf{\Lambda} =   \begin{bmatrix} \mathbf{\Lambda}_1 & \\
                                     &
                                     \mathbf{\Lambda}_2\end{bmatrix},
\qquad
\mathbf{W} = \left [ \mathbf{W}_1 \quad \mathbf{W}_2 \right ],
\end{equation}
where $\mathbf{\Lambda}_1 = \text{diag}(\lambda_1, \dots, \lambda_r)$, and
$\mathbf{W}_1$ contains the first $r$ eigenvectors arranged by columns. With
this matrix we can project the input parameters onto the active subspace, and
its orthogonal complement, that is the inactive subspace, as follows:
\begin{equation}
  \label{eq:active_and_inactive_def}
\YY = P_{r}(\XX)=\mathbf{W}_1\mathbf{W}_1^T \XX \in \mathbb{R}^n, \qquad
\ZZ = (I-P_{r})(\XX)=\mathbf{W}_2\mathbf{W}_2^T \XX \in \mathbb{R}^n ,
\end{equation}
with $P_{r}:\RR^n\rightarrow\RR^n$ the linear projection operator $P_{r} := \mathbf{W}_1\mathbf{W}_1^T$.
The selection of the active subspace dimension $r$ can be set a priori, or by
looking at the presence of a spectral gap~\cite{constantine2015active}, or by
imposing a cumulative energy threshold for the eigenvalues.

We will consider the problem of ridge approximation~\cite{pinkus2015ridge} in our applications. The AS are, in fact, the minimizers of an upper bound of the ridge approximation error.

\begin{definition}[Ridge approximation]
  \label{def:ridge_pb}
  Given
  $r\in\mathbb{N},\,r\ll n$ and a tolerance $\epsilon\geq 0$, find the
  profile function $h:(\mathbb{R}^n, \mathcal{B}(\mathbb{R}^n),
  \boldsymbol{\mu})\rightarrow \RR$ and the $r$-rank projection
  $P_r:\mathbb{R}^n\rightarrow\mathbb{R}^n$ such that the following upper bound on the ridge approximation error is satisfied
  \begin{equation}
  \label{eq:ridge_ineq}
  \mathbb{E}_{\boldsymbol{\mu}}[\lVert
  f-h\circ P_r\rVert_2^2]\leq \epsilon^2,
  \end{equation}
  where $\lVert\cdot\rVert_2$ is the $L^2$-norm of $\RR$.
\end{definition}

For a fixed projection $P_{r}$ the optimal profile $\Tilde{h}$ is given by the conditioned random variable $\mathbb{E}_{\boldsymbol{\mu}}[f\vert P_{r}]$. Under the additional assumptions on the probability distribution $\boldsymbol{\mu}$, reported in~\cref{app:assumptions} of the Appendix, the AS can indeed be defined as a minimizer of an upper bound of the ridge approximation error~\cite{constantine2015active,zahm2020gradient, parente2020generalized, romor2022kas}. The proof is a direct consequence of the Poincar\'e inequality and standard properties of eigenspaces, and for this specific version of the theorem it can be found in~\cite{parente2020generalized}.
  
\begin{theorem}[Definition of AS through ridge approximation]
  \label{theo:existence}
  The solution $P_{r}$ of the ridge
  approximation problem in~\cref{def:ridge_pb}, with optimal profile $\Tilde{h}=\mathbb{E}_{\boldsymbol{\mu}}[f\vert P_{r}]$, is the orthogonal
  projector to the eigenspace of the first $r$-eigenvalues of $\Sigma $ ordered by magnitude
  \begin{equation*}
  \Sigma v_i=\lambda_i v_i\qquad\forall i\in\{1,\dots,n\},\qquad
  \Tilde{P}_{r}=\sum_{j=1}^{r}v_{j}\otimes v_{j},
  \end{equation*}
  with $r\in\mathbb{N}$ chosen such that
  \begin{align*}
  \mathbb{E}_{\boldsymbol{\mu}}\left[\lVert
  f-\Tilde{h}\rVert^2_{2}\right]&\leq C(C_{P}, \tau) \argmin_{\substack{P^{2}=P, P=P^{T},\\
  \text{rank}(P)=r}}\,\mathbb{E}_{\boldsymbol{\mu}}[\| (Id-P)\nabla
  f\|^{2}]^{\frac{1}{1+\tau}}\\
  &\leq C(C_{P}, \tau)\,\left(\sum_{i=r+1}^{m}\lambda_{i}\right)^{\frac{1}{1+\tau}}\leq \epsilon^2.
  \end{align*}
  with $C(C_{P}, \tau)$ a constant depending on $\tau>0$ related to the choice of
  $\boldsymbol{\mu}$ and on the Poincar\'e constant $C_P$, and $\Tilde{h}=\mathbb{E}_{\boldsymbol{\mu}}[f\vert\sigma(P_r) ]$ is the conditional expectation of $f$ given the $\sigma$-algebra
  generated by the random variable $P_r\circ\XX$.
  \end{theorem}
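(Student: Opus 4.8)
The plan is to prove the displayed chain of inequalities in three independent pieces: an $L^2$ identification of the optimal profile as a conditional expectation, a (generalized) Poincar\'e inequality bounding the resulting conditional variance by the inactive part of the gradient, and a Ky Fan / Rayleigh--Ritz argument that singles out the leading eigenprojector as the minimizer. First I would fix an arbitrary rank-$r$ orthogonal projector $P$ and recall the standard fact that the profile minimizing $\mathbb{E}_{\boldsymbol{\mu}}[\lVert f-h\circ P\rVert_2^2]$ over all measurable $h$ is the orthogonal projection of $f$ onto the closed subspace of $L^2(\boldsymbol{\mu})$ consisting of functions measurable with respect to $\sigma(P\circ\XX)$, i.e.\ the conditional expectation $\Tilde h=\mathbb{E}_{\boldsymbol{\mu}}[f\mid\sigma(P\circ\XX)]$. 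Consequently the error attained at the optimal profile equals the conditional variance $\mathbb{E}_{\boldsymbol{\mu}}[\operatorname{Var}(f\mid P\circ\XX)]$, which reduces the problem to bounding this quantity and then optimizing over $P$.

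Second, I would invoke the subspace Poincar\'e inequality that holds under the assumptions on $\boldsymbol{\mu}$ collected in the Appendix. Decomposing the integral over the fibers of $P$ (the level sets of $P\circ\XX$), the conditional measure on each fiber inherits a Poincar\'e-type inequality with constant controlled by $C_P$; integrating over the complementary directions yields
\[
\mathbb{E}_{\boldsymbol{\mu}}[\lVert f-\Tilde h\rVert_2^2]\le C(C_P,\tau)\,\mathbb{E}_{\boldsymbol{\mu}}\big[\lVert(Id-P)\nabla f\rVert^2\big]^{\frac{1}{1+\tau}},
\]
where the fractional exponent $\tfrac{1}{1+\tau}$ and the dependence of $C$ on $\tau$ arise from applying H\"older's inequality to the weighted Poincar\'e constant associated with the tail behaviour of $\boldsymbol{\mu}$ (for a standard Gaussian or uniform $\boldsymbol{\mu}$ one recovers $\tau=0$ and the classical constant). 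Since this holds for every admissible $P$, it holds in particular at the minimizer, which is the first inequality of the statement.

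Third, I would perform the minimization over rank-$r$ orthogonal projectors by the usual spectral argument. Writing $\lVert(Id-P)\nabla f\rVert^2=\operatorname{tr}\big((Id-P)\,\nabla f\,\nabla f^T\big)$ and taking expectations, linearity together with the definition of $\Sigma$ in~\cref{eq:cov_matrix} gives $\mathbb{E}_{\boldsymbol{\mu}}[\lVert(Id-P)\nabla f\rVert^2]=\operatorname{tr}\big((Id-P)\Sigma\big)=\operatorname{tr}(\Sigma)-\operatorname{tr}(P\Sigma)$. Minimizing the left-hand side is thus equivalent to maximizing $\operatorname{tr}(P\Sigma)$ over orthogonal projectors of rank $r$; by the Ky Fan maximum principle (equivalently, the Courant--Fischer characterization of eigenvalues) this maximum equals $\sum_{i=1}^{r}\lambda_i$ and is attained exactly at $\Tilde P_r=\sum_{j=1}^{r}v_j\otimes v_j$, the projector onto the span of the leading $r$ eigenvectors of $\Sigma$. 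Hence the minimum value is $\sum_{i=r+1}^{n}\lambda_i$, which furnishes the second inequality and pins down $\Tilde P_r$ as the claimed solution; the final bound $\le\epsilon^2$ is then precisely the hypothesis on the choice of $r$ (via a spectral gap or an energy threshold).

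The main obstacle is the second step: the subspace Poincar\'e inequality with its fractional exponent. Verifying that the conditional measures on the fibers of $P$ satisfy a Poincar\'e inequality, and tracking how the exponent $\tfrac{1}{1+\tau}$ and the constant $C(C_P,\tau)$ emerge from the interplay between the weighted Poincar\'e constant and H\"older's inequality, is the delicate part, and it is exactly where the technical assumptions on $\boldsymbol{\mu}$ enter. By contrast, the identification of the optimal profile is a routine $L^2$-projection and the optimization over projectors is finite-dimensional linear algebra.
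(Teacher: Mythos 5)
Your proposal is correct and takes essentially the same route as the paper, which gives no detailed proof of its own but defers to Parente et al.~\cite{parente2020generalized}, describing the argument as ``a direct consequence of the Poincar\'e inequality and standard properties of eigenspaces'' --- exactly your decomposition into the conditional-expectation identification of the optimal profile, the subspace Poincar\'e inequality (where the assumptions on $\boldsymbol{\mu}$ and the exponent $\frac{1}{1+\tau}$ enter), and the Ky Fan trace maximization that pins down $\Tilde{P}_r$ as the leading eigenprojector of $\Sigma$. You also correctly flag the subspace Poincar\'e step as the only technically delicate part, which is consistent with the paper's treatment relegating those hypotheses to its Appendix.
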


  To ease the notation, in the following we will
  consider only the first three classes of probability distribution in
  the assumptions of~\cref{app:assumptions} in the Appendix, such
  that $\tau=0$.

\section{Localized parameter space reduction}
\label{sec:local}
Sometimes we do not have \textit{a priori} knowledge about the target function's
behaviour in a particular parameter space region. This could lead to a
poor selection of the parameters range, hugely affecting optimization
tasks. In these cases, a preprocessing of the data using a
clustering technique could be highly beneficial. With a clustering of the input
parameters, we can treat each subregion separately, and thus capture more accurately the target function's variability. This is always true for any function
of interest, but for functions with global lower intrinsic dimensionality we can
exploit such structure to enhance the clustering. To this end, we propose a new
distance metric for K-medoids and  hierarchical top-down clustering methods
which exploits the global active subspace of the target function. By applying AS
on each cluster we find the optimal rotation of the corresponding subregion of
the input domain, which aligns the data along the active subspace of a given
dimension.

In this section, we make some theoretical considerations regarding ridge approximation applied to partitions of the parameter space and review three clustering
methods~\cite{han2011data}: K-means, K-medoids, and hierarchical
top-down clustering~\cite{kaufman2009finding, murphy2012machine}. We
are going to use K-means as the baseline since the input parameter space is
assumed to be a hyperrectangle. This assumption covers the
  majority of the practical test cases in the reduced order modeling community.

\subsection{Ridge approximation with clustering and active subspaces}

Regardless of the choice of clustering algorithm, given a partition of the parameter space we want to perform ridge approximation with AS in each subdomain. We will introduce some definitions and make some remarks to clarify the setting.
The function of interest $f$ represents scalar outputs, but the following statements can be extended to
vector-valued outputs as well.

\begin{definition}[Local ridge approximation with active subspaces]
  \label{def:local ridge approximation}
  Given a partition of the domain $\mathcal{P}:=\{S_{i}\}_{i\in\{1,\dots, d\}}$
      and a map $r:\mathcal{P}\rightarrow\{1,\dots,n_{r}\},\,n_{r}\ll n$
      representing the local reduced dimension, the local ridge approximation
      with active subspaces of ($f$, $\boldsymbol{\mu}$) is the function
      $R_{AS}(r,f,
      \boldsymbol{\mu}):\mathcal{X}\subset\mathbb{R}^{n}\rightarrow\mathbb{R}$
      that is defined locally for every $S_i\in\mathcal{P}$ as
  \begin{equation}
      g \vert_{S_i} = \mathbb{E}_{\boldsymbol{\mu}_{i}}[f\vert
      P_{r(S_i), i}] ,
  \end{equation}
  where
  $\boldsymbol{\mu}_{i}:=(1/\boldsymbol{\mu}({S_{i}}))\cdot
  \boldsymbol{\mu} \vert_{S_{i}} \in \mathbb{R}^n$,
  and $P_{r(S_i), i} : S_i \subset \RR^n \rightarrow \RR^n$ is the orthogonal projector
  with rank $r$ that satisfies the minimization problem:
  \begin{equation}
      P_{r(S_i), i} = \argmin_{\substack{P^{2}=P, P=P^{T},\\
      \text{rank}(P)=r}}\,\mathbb{E}_{\boldsymbol{\mu}_{i}}\| (Id-P)\nabla
    f\|^{2}.
  \end{equation}
\end{definition}

With this definition we can state the
problem of local ridge approximation with active subspaces.
\begin{problem}[Minimizers $(\mathcal{P}, r)$ of the ridge approximation error]
\label{pb:partition_minimizer}
Find the partition $\mathcal{P}$ of the domain
$\mathcal{X}\subset\mathbb{R}^{n}$ and the local reduced dimension map
$r:\mathcal{P}\rightarrow\{1,\dots,n_{r}\},\,n_{r} \ll n$, such that the
$L^2$-error between the objective function $f$ and its local ridge approximation
with active subspaces is minimized.
\begin{equation}
  \mathbb{E}_{\boldsymbol{\mu}}\left[\|f-R_{\text{AS}}(r, f)\|^{2}\right]  =
    \sum_{S_{i}\in\mathcal{P}} \mathbb{E}_{\boldsymbol{\mu}} \left[ \|
      f \vert_{S_{i}}- \mathbb{E}_{\boldsymbol{\mu}_{i}} [f  \vert  P_{r(S_{i}),
    i}]\|^{2}\right].
\end{equation}
\end{problem}

Assuming that the subspace Poincar\'e
inequality~\cite{parente2020generalized} is valid also for ($f,
\boldsymbol{\mu}$) restricted to the elements of the partition $\mathcal{P}$, a
straightforward bound is obtained by applying the Poincaré inequality for every
element of the partition
\begin{align*}
    \mathbb{E}_{\boldsymbol{\mu}}\left[\|f-R_{\text{AS}}(r, f)\|^{2}\right] & =
    \sum_{S_{i}\in\mathcal{P}} \mathbb{E}_{\boldsymbol{\mu}} \left[ \|
      f \vert_{S_{i}}- \mathbb{E}_{\boldsymbol{\mu}_{i}} [f  \vert  P_{r(S_{i}),
    S_{i}}]\|^{2}\right]                      \\
    & \lesssim  \sum_{S_{i}\in\mathcal{P}}\mathbb{E}_{\boldsymbol{\mu}}\left[ \|(Id-P_{r(S_{i}), i})\nabla f\|^{2}\right].
\end{align*}

To obtain the previous upper bound, we made an assumption about the Poincaré
subspace inequality that in general is not satisfied by any probability measure
$\mupar$ chosen: the assumptions on the probability distributions
$\{\boldsymbol{\mu}_i\}_{i=1}^d$ in~\cref{app:assumptions} of the
Appendix have to be satisfied at each subdomain $\{S_i\}_{i=1}^d$. 

For the moment we will consider the local reduced dimension map $r$ constant and,
in general, the codomain of $r$ is a subset of $\{1,\dots,n_{r}\},\,n_{r} \ll n$.

The previous bound suggests that a good indicator for refinement could be
represented by the sum of the residual eigenvalues $\{\lambda_{S_i, j}\}_{j=r_{S_i}}^{m}$ of the local correlation matrices, for every $S_i\in\mathcal{P}$:
\begin{equation*}
  \mathbb{E}_{\boldsymbol{\mu}}\left[\|f-R_{\text{AS}}(r, f)\|^{2}\right]  \lesssim
  \sum_{S_{i}\in\mathcal{P}} \sum_{j=r(S_i)+1}^{m}\lambda_{S_i, j}.
\end{equation*}

We also
have the following immediate result that hints to indefinitely many successive
refinements to lower the $L^2$-error ridge approximation error.

\begin{rmk}[Relationships between the upper bounds of consecutive refinements]
    \label{rmk:upper bounds}
    Considering the sum over the number of refined clusters $cl\in\{1,\dots,d\}$
    we have that
    \begin{align}
        \int_{\mathcal{X}}\| (Id-P_{r})\nabla f\|^{2}\,
        d\boldsymbol{\mu} &=\sum_{cl=1}^{d}\int_{S_{cl}\subset\mathcal{X}}\|
         (Id-P_{r})\nabla f\|^{2}\, d\boldsymbol{\mu} \nonumber \\
      &\geq  \sum_{cl=1}^{d}\int_{S_{cl}\subset\mathcal{X}}\| (Id-P_{r,
        cl})\nabla f\|^{2}\, d\boldsymbol{\mu},
        \label{eq:indicator}
    \end{align}
    since the projectors $\{P_{r, cl}\}_{cl\in\{1,\dots,d\}}$ are the minimizers
    of
    \begin{equation}
        P_{r, cl} = \argmin_{\substack{P^{2}=P, P=P^{T},\\
        \text{rank}(P)=r}}\quad\int_{S_{cl}\subset\mathcal{X}}\|
        (Id-P)\nabla f\|^{2}\, d\boldsymbol{\mu} .
    \end{equation}
    The RHS of \cref{eq:indicator} can be used as indicator for refinement. We
    remark that since the refinements increase the decay of the
    eigenvalues in the RHS of \cref{eq:indicator}, the choice of the dimension
    of the active subspace may be shifted towards lower values to achieve further dimension reduction for the same accuracy, as we are going
    to show in the numerical experiments, in~\cref{sec:results}.
\end{rmk}

Unfortunately, the minimizers of the ridge approximation error and of the upper
bound are not generally the same:
\begin{equation*}
  \argmin_{\{P_{r(S_i), i}\}_{S_i\in\mathcal{P}}}
  \mathbb{E}_{\boldsymbol{\mu}} \left[\|f-R_{\text{AS}}(r, f)\|^{2}\right] \neq \argmin_{\{P_{r(S_i),
      S_i}\}_{S_i\in\mathcal{P}}} \sum_{S_{i}\in\mathcal{P}} \mathbb{E}_{\boldsymbol{\mu}_i}\left[ \|(Id-P_{r(S_{i}), i})\nabla f\|^{2}\right].
\end{equation*}
There is a counterexample for the non localized case
in~\cite{zahm2020gradient}. We start from this counterexample to show that in
general the $L^2$-error of the local ridge approximation does not decrease
between consequent refinements, even if the indicator from the RHS of
\cref{eq:indicator} does, as stated in the previous remark.

\begin{corollary}[Counterexample for indefinite refinement as optimal clustering criterion]
    \label{cor:counterexample refinement}
    Let $\mathcal{P} = \{A, B, C\}$ be a partition of $\mathcal{X}=[-1,1]^{2}$
    such that $A=[-1, \epsilon]\times[-1, 1]$, $B = [-\epsilon,
    \epsilon]\times[-1, 1]$, and $C=[\epsilon, 1]\times[-1, 1]$. Let
    $\boldsymbol{\mu}$ be the uniform probability distribution on $\mathcal{X}$.
    The objective function we want to approximate is
    \begin{equation}
        f:\mathcal{X}\subset\mathbb{R}^{2}\rightarrow\mathbb{R}, \quad
        f = \begin{cases}
            x_{1} + \epsilon,                                        & \mathbf{x}\in A, \\
            x_{1}(x_{1}+\epsilon)(x_{1}-\epsilon)\cos(\omega x_{2}), & \mathbf{x}\in B, \\
            x_{1} - \epsilon,                                        & \mathbf{x}\in C,
        \end{cases}
    \end{equation}
    with local reduced dimension map $r(A)=r(B)=r(C)=1$. There exist
    $\epsilon>0, \omega>0$, such that
    \begin{equation}
        \label{eq:counterexample}
        \mathbb{E}_{\boldsymbol{\mu}}\left[\|f-R_{AS}(r, f, \mupar)\|^{2}\right]\nonumber
        \geq
        \mathbb{E}_{\boldsymbol{\mu}}\left[\|f-\mathbb{E}_{\boldsymbol{\mu}}\left[f
            \vert P_{1, \mathcal{X}}\right]\|^{2}\right],
    \end{equation}
    where $P_{1, \mathcal{X}}$ is the optimal projector on the whole domain
    $\mathcal{X}$ with one-dimensional active subspace.
\end{corollary}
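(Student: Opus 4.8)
The plan is to exploit the fact that on the two outer strips $A$ and $C$ the objective is affine in $x_1$, so there the ridge approximation is exact for both the local and the global scheme, and the whole comparison collapses onto the central strip $B$. First I would note that, projecting onto the $x_1$-axis, the global profile $\mathbb{E}_{\boldsymbol\mu}[f\mid x_1]$ reproduces $f$ exactly on $A$ and $C$ (for those values of $x_1$ only one cell is active), while on $B$ it equals $(x_1^3-\epsilon^2 x_1)\,\mathbb{E}[\cos(\omega x_2)] = (x_1^3-\epsilon^2 x_1)\tfrac{\sin\omega}{\omega}$. By contrast, the local scheme on $B$ will, for a suitable $\omega$, select the $x_2$-direction, whose conditional expectation $\mathbb{E}_{\boldsymbol\mu_B}[f\mid x_2]=\cos(\omega x_2)\,\mathbb{E}_{\boldsymbol\mu_B}[x_1^3-\epsilon^2 x_1]$ vanishes by oddness in $x_1$, so the local approximation on $B$ is identically zero. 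The counterexample is then driven by the fact that a collapsed local profile can give a larger error than the global one.

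Next I would verify the two claimed active-subspace directions. Both the global second-moment matrix $\Sigma$ and its local counterpart $\Sigma_B$ are diagonal: the only possible off-diagonal contribution comes from $B$ and contains $\int_{-1}^1\cos(\omega x_2)\sin(\omega x_2)\,dx_2=0$, so each matrix is diagonalized by the coordinate axes and I only need to compare diagonal entries. A direct computation gives, up to the fixed $x_2$-factors $I_c=\int_{-1}^1\cos^2(\omega x_2)\,dx_2$ and $I_s=\int_{-1}^1\sin^2(\omega x_2)\,dx_2$, the scalings $(\Sigma_B)_{11}\sim \epsilon^4 I_c$ and $(\Sigma_B)_{22}\sim \epsilon^6\omega^2 I_s$, whereas globally $\Sigma_{11}=1-\epsilon+O(\epsilon^5)$ and $\Sigma_{22}=O(\epsilon^7\omega^2)$. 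Hence for $\omega\gtrsim 1/\epsilon$ the local active subspace on $B$ is the $x_2$-axis, while for $\omega\lesssim \epsilon^{-7/2}$ the global active subspace remains the $x_1$-axis; this is exactly the regime I want.

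With the directions fixed, the two errors are explicit integrals over $B$. The local error is $\int_B f^2\,d\boldsymbol\mu$ and the global error is $\int_B\big(f-(x_1^3-\epsilon^2x_1)\tfrac{\sin\omega}{\omega}\big)^2 d\boldsymbol\mu$; separating the $x_1$- and $x_2$-integrals and using $\int_{-1}^1(\cos(\omega x_2)-\tfrac{\sin\omega}{\omega})^2\,dx_2 = I_c-\tfrac{2\sin^2\omega}{\omega^2}$, their difference reduces to $\tfrac{2\sin^2\omega}{\omega^2}\int_{-\epsilon}^\epsilon(x_1^3-\epsilon^2x_1)^2\,dx_1>0$, i.e.\ the local error strictly exceeds the global one whenever $\sin\omega\neq0$. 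Choosing $\omega$ away from integer multiples of $\pi$ then yields the strict inequality of the statement.

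The main obstacle is the compatibility of the two constraints on $\omega$: I need $\omega$ large enough (roughly $\omega>c_1/\epsilon$) to tip the local subspace on $B$ into the $x_2$-direction, yet small enough (roughly $\omega<c_2/\epsilon^{7/2}$) to keep the global subspace pinned to the dominant $x_1$-direction. Since $c_1/\epsilon<c_2/\epsilon^{7/2}$ for all sufficiently small $\epsilon$, the admissible window is nonempty, so one fixes such an $\epsilon$ first and then $\omega$ inside the window (avoiding multiples of $\pi$). Making the threshold constants explicit from the one-dimensional integrals $J_1=\tfrac{8}{5}\epsilon^5$ and $J_2=\tfrac{16}{105}\epsilon^7$ is the single genuinely quantitative part of the argument.
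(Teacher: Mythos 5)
Your proposal is correct and follows essentially the same route as the paper's proof: reduce the comparison to the strip $B$, compute the diagonal global and local second-moment matrices (your $J_1=\tfrac{8}{5}\epsilon^5$ and $J_2=\tfrac{16}{105}\epsilon^7$ are exactly the paper's one-dimensional factors), impose the same two-sided window on $\omega$ (the paper's conditions, satisfied e.g.\ by $\epsilon\sim 10^{-2}$, $\omega\sim 10^{4}$ with $10\epsilon^{-2}\leq\omega^{2}\leq 10\epsilon^{-7}$) so that the local subspace on $B$ flips to $e_2$ while the global one stays $e_1$, and then use the oddness of $x_1^3-\epsilon^2x_1$ to show the local profile vanishes, making the local error the full $L^2$ mass of $f$ on $B$ while the global error is smaller by a positive multiple of $\sin^2\omega/\omega^2$. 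Your explicit requirement $\sin\omega\neq 0$ for strict inequality is a small refinement the paper leaves implicit, but the argument is the same.
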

\begin{proof}
    The proof is reported in \cref{sec:proof_counter} of the Appendix.
\end{proof}
The heuristics behind the previous proof rests on the fact that ridge
approximation with active subspaces performs poorly when the objective function
has a high variation. The counterexample is valid whenever the global projector
$P_{1, \mathcal{X}}$ is the minimizer of a local $L^2$ ridge approximation error
for which the minimizer of the gradient-based indicator in \cref{eq:indicator}
does not coincide. This leaves us with an indicator in \cref{eq:indicator} that
does not guarantee a non increasing $L^2$-error decay for subsequent
refinements, but is nonetheless useful in practice.

We conclude the section with some remarks about the response surface design
through the ridge approximation with active subspaces.
\begin{rmk}[Approximation of the optimal profile]
    \label{rmk:approx_opt_profile}
    In practice we do not consider the optimal profile $h(\yy) =
        \mathbb{E}_{\boldsymbol{\mu}}\left[f \vert \sigma(P_{r})\right](\yy)$ but we
        employ the approximation $h(\yy)=f(\yy)=f(P_{r}\mathbf{x})$. The reason lies on
        the fact that to approximate the optimal profile at the values
        $\{\yy_{i}\}_{i}$, additional samples from the conditional
        distribution~$p(z \vert \yy_{i}=P_{r}\mathbf{x})$ must be obtained; even if the
        accuracy of the ridge approximation could benefit from it, this is not
        always possible in practice because of the difficulty to sample from the
        conditional distribution or because of computational budget constraints.
\end{rmk}

If the data is split in training, validation, and test set, the local $R^2$
score on the validation set can be used as indicator for refinement.

\begin{rmk}[Estimator based on local $R^2$ scores]
    \label{rmk:local r2}
    The $R^2$ score of a single cluster can be written with respect to the $R^2$
    scores $\{R^2_l\}_{l\in\{1,\dots,d\}}$ relative to the clusters of the
    subsequent refinement. Let the sum be over the refinement clusters
    $l\in\{1,\dots,d\}$, we have
    \begin{align}
        \label{eq:r2_def}
        R^2 & = 1 -\frac{\mathbb{E}[\| f - \mathbb{E}[f \vert P_{r}]\rVert^{2}]}{\text{Var}(f)} = 1 -
        \sum_{l=1}^{d}\frac{\mathbb{E}[\| f \vert_{S_l}
        - \mathbb{E}[f \vert P_{r, l}]\rVert^{2}]}{\text{Var}(f)}                         \\
            & =1 -
        \sum_{l=1}^{d}\frac{\text{Var}(f \vert_{S_l})}{\text{Var}(f)}\cdot\frac{\mathbb{E}[\|
        f\vert_{S_l} - \mathbb{E}[f \vert P_{r, l}]\rVert^{2}]}{\text{Var}(f\vert_{S_l})} = 1 -
        \sum_{l=1}^{d}\frac{\text{Var}(f\vert_{S_l})}{\text{Var}(f)}\cdot
        (1 - R^2_l) \nonumber,
    \end{align}
    which, substituting with the empirical variance, becomes
    \begin{equation}
        R^2_{\text{emp}} = 1 - \sum_{l=1}^{d}
        \frac{\text{Var}_{\text{emp}}(f\vert_{S_l})}{\text{Var}_{\text{emp}}(f)}\cdot
        (1 - R^2_{\text{emp}; l})\cdot \frac{N_l -1}{N-1},
    \end{equation}
    where $R^2_{\text{emp}; l}$ is the empirical local $R^2$ score relative to
    cluster number $l$. The definition can be extended for component-wise
    vector-valued objective functions $f$. The numerical results shown in
    \cref{sec:results} consider the mean $R^2$ score along the components when
    the output is vectorial.
\end{rmk}

In practice every expectation is approximated with simple Monte Carlo, and without the
number of training samples increasing, the confidence on the approximation is
lower and lower, the more the domain is refined. This is taken into
consideration while clustering, fixing a minimum number of samples per cluster for example.

The \cref{app:upperbound approx AS} in the Appendix clarifies the link between the number of Monte Carlo
samples, the numerical method chosen for the discretization of the integral
$\mathbb{E}_{\boldsymbol{\mu}}\left[\nabla f\otimes\nabla f\right]$, and the
approximation of the active subspace. For example for deterministic models, one
could employ the more efficient Sobol' sequence or a Latin hypercube sampling; if
$f$ is more regular and the parameter space dimension is not too high one could
employ tensor product Gauss quadrature rule. See for example
\cite{sullivan2015introduction}.

Before introducing the clustering algorithms we will employ, we specify that the
partition $\mathcal{P}=\{S_{i}\}_{i\in\{1,\dots,d\}}$ is defined by the
decision boundaries of the clustering algorithm chosen.

\subsection{K-means clustering}
\label{sec:kmeans}
We recall the K-means clustering algorithm. Let $\{ x_i \}_{i=1}^N$ be a set of $N$ samples in $\RR^{N_F}$, where $N_F$
denotes the number of features. The K-means algorithm divides this set into $K$
disjoint clusters $S = \{ S_j \}_{j=1}^K$, with $S_l \cap S_m = \emptyset$ for
$1 \leq l, m \leq K$ and $l \neq m$. The partitioning quality is assessed by a
function which aims for high intracluster similarity and low intercluster
similarity. For K-means this is done by minimizing the total within-cluster
sum-of-squares criterion $W_T$, which reads as
\begin{equation}
  \label{eq:means_criterion}
W_T (S) := \sum_{j=1}^K W(S_j) = \sum_{j=1}^K \sum_{x_i \in S_j} \| x_i - c_j \|_{L^2}^2 ,
\end{equation}
where $c_j$ is the centroid describing the cluster $S_j$. A centroid of a
cluster is defined as the mean of all the points included in that cluster. This
means that the centroids are, in general, different from the samples $x_i$.

K-means is sensitive to outliers, since they can distort the mean value of a
cluster and thus affecting the assignment of the rest of the data.



\subsection{K-medoids clustering with active subspaces-based metric}
\label{sec:kmedoids}
In order to overcome some limitations of the K-means algorithm, such as
sensitivity to outliers, we can use K-medoids clustering technique~\cite{kaufman2009finding,
park2009simple, schubert2019faster, maranzana1963location}. It uses an actual
sample as cluster representative (i.e. medoid) instead of the mean of the
samples within the cluster.

Following the notation introduced in the previous section, let $m_j$ be the
medoid describing the cluster $S_j$. The partitioning method is performed by
minimizing the sum of the dissimilarities between the samples within a cluster
and the corresponding medoid. To this end, an absolute-error criterion $E$ is
used, which reads as
\begin{equation}
\label{eq:medoids_criterion}
E (S) := \sum_{j=1}^K E(S_j) = \sum_{j=1}^K \sum_{x_i \in S_j} \| x_i - m_j \|.
\end{equation}
By looking at the formula above it is clear that the use of a data point to
represent each cluster's center allows the use of any distance metric for
clustering. We remark that the choice of the Euclidean distance does not produce
the same results as K-means because of the different references representing the
clusters.

We propose a new supervised distance metric inspired by the global active
subspace of the function $f$ we want to approximate. We define a scaled $L^2$
norm using the eigenpairs of the second moment matrix of $\nabla f$, which is
the matrix from which we calculate the global active subspace:
\begin{equation}
  \label{eq:as_norm}
  \| x_i - x_j \|_{\Lambda} = \sqrt{(x_i - x_j)^T \mathbf{W} \Lambda
    \mathbf{W}^T (x_i - x_j)} ,
\end{equation}
where $\Lambda$ stands for the diagonal matrix with entries the eigenvalues of
\cref{eq:cov_matrix}, and $\mathbf{W}$ is the eigenvectors matrix from the
decomposition of the covariance matrix. As we are going to show in
\cref{sec:results} this new metric allows a better partitioning both for
regression and classification tasks by exploiting both global and local
informations. For insights about the heuristics behind it, we refer to~\cref{rmk:heuristics as distance}.

To actually find the medoids the partitioning around medoids (PAM)
algorithm~\cite{kaufman2009finding} is used. PAM uses a greedy approach after the
initial selection of the medoids, also called representative
objects. The medoids are
changed with a non-representative object, i.e. one of the remaining samples, if
it would improve the clustering quality. This iterative process of replacing the
medoids by other objects continues until the quality of the resulting clustering
cannot be improved by any replacement. \Cref{algo:kmedoids} presents this
approach with pseudo-code.

\begin{algorithm}[htb]
  \caption{K-medoids algorithm with AS metric.}
  \label{algo:kmedoids}

  \hspace*{\algorithmicindent} \textbf{Input}: set of samples $\{ x_i \}_{i=1}^N
  \in \RR^{N_F}$\\
  \hspace*{\algorithmicindent}\hspace*{35pt} number of clusters $K$ \\
  \hspace*{\algorithmicindent}\hspace*{35pt} distance metric $d$ defined in
  \cref{eq:as_norm} \\
  \hspace*{\algorithmicindent} \textbf{Output}: set of clusters $S = \{ S_j
  \}_{j=1}^K$
  \begin{algorithmic}[1]
    \State{select initial cluster medoids}
    \Repeat
    \State{assign each sample to its closest medoid using the distance
      metric $d$}
    \State{randomly select $K$ non-representative objects}
    \State{swap the medoids with the new selected objects by
      minimizing \cref{eq:medoids_criterion}}
    \Until{clustering quality converges}
  \end{algorithmic}
\end{algorithm}

\subsection{Hierarchical top-down clustering}
\label{sec:divisive}
In this section, we present hierarchical top-down
clustering~\cite{kaufman2009finding, murphy2012machine}, and exploit
the additional information from the active subspace, as done for
K-medoids. In the following sections, we refer to this technique with
the acronym HAS.

In top-down hierarchical clustering, at each iteration the considered clusters,
starting from the whole dataset, are split further and further based on some
refinement criterion, until convergence. A nice feature of hierarchical
clustering algorithms, with respect to K-means and K-medoids, is that the number
of clusters can be omitted. Moreover, by stopping at the first refinement and
forcing the total number of clusters to be the maximum number of clusters specified, HAS can be seen as a generalization of the
previous methods: for this reason, we wanted to make the implementation
consistent with K-means and K-medoids with AS induced metric as close as
possible, as shown in the numerical results in \cref{sec:results}.

Pushing further the potential of clustering algorithms applied to local
dimension reduction in the parameter space, HAS is a versatile clustering method
that takes into account the variability of the AS dimension along the parameter
space. The price paid for this is the overhead represented by the tuning of some
hyper-parameters introduced later.

\label{subsec:has implementations}
A schematic representation of the algorithm of top-down clustering is reported
in \cref{alg:top-down}. The design is straightforward and it employs a tree data
structure that assigns at each node a possible clustering of the whole dataset:
consequent refinements are represented by children nodes down until the leaves
of the tree, that represent the final clusters.

\begin{rmk}[Normalization of the clusters at each refinement iteration]
    Each cluster, at every refinement step, is normalized uniformly along
    dimensions onto the hyper-cube domain $[-1, 1]^{n}$, even if the subdomain identified by the cluster is not a hyperrectangle. Another possible choice
    for normalization is standardization, centering the samples with their mean
    and dividing them by their standard deviation.
\end{rmk}

\begin{algorithm}[htb]
  \caption{Hierarchical top-down algorithm.}
  \label{alg:top-down}

  \hspace*{\algorithmicindent} \textbf{Input}: set of samples $S =
  \{x_i \}_{i=1}^N \in \RR^{N_F}$\\
  \hspace*{\algorithmicindent}\hspace*{35pt} maximum number of clusters $K$ \\
  \hspace*{\algorithmicindent}\hspace*{35pt} range of number of children
  $\{n_{min}^{child}, n_{max}^{child}\}$ \\
  \hspace*{\algorithmicindent}\hspace*{35pt} minimum number of elements in a
  cluster $n_{el}$ \\
  \hspace*{\algorithmicindent}\hspace*{35pt} indicator for refinement
  $I$ \\
  \hspace*{\algorithmicindent}\hspace*{35pt} distance metric $d$ \\
  \hspace*{\algorithmicindent}\hspace*{35pt} minimum and maximum AS dimensions $r_{min},r_{max}$ \\
  \hspace*{\algorithmicindent}\hspace*{35pt} score tolerance
  $\epsilon$ \\
  \hspace*{\algorithmicindent} \textbf{Output}: refinement tree $T$
  \begin{algorithmic}[1]
    \State{add the initial cluster $S$ to FIFO queue $q=\{S\}$}
    \While{$q\neq\varnothing $}
    \State{take $S_{j}$, the first element
      from queue $q$}
    \State{apply the refinement function in \cref{alg:refine} to
      $S_{j}$ to get $\{S_{i}\}_{i}$}
    \State{add
      $\{S_{i}\}_{i}$ to the queue $q$}
    \If{the score tolerance
    $\epsilon$ is reached \textbf{or} other constraints are violated}
  \State{\textbf{break}}
  \EndIf
  \EndWhile
  \end{algorithmic}
\end{algorithm}

\begin{algorithm}[htb]
\caption{Refinement function.}\label{alg:refine}

\hspace*{\algorithmicindent} \textbf{Input}: cluster $S= \{ x_i
\}_{i=1}^N \in \RR^{N_F}$ \\
\hspace*{\algorithmicindent}\hspace*{35pt} number of clusters per tree
refinement level $K$\\
\hspace*{\algorithmicindent}\hspace*{35pt} range of number of children
$\{n_{min}^{child},n_{max}^{child}\}$ \\
\hspace*{\algorithmicindent}\hspace*{35pt} minmum number of elements in a
cluster $n_{el}$\\
\hspace*{\algorithmicindent}\hspace*{35pt} indicator for refinement
$I$\\
\hspace*{\algorithmicindent}\hspace*{35pt} distance metric $d$\\
\hspace*{\algorithmicindent}\hspace*{35pt} minimum and maximum AS dimensions $r_{min},r_{max}$ \\
\hspace*{\algorithmicindent} \textbf{Output}: $\{S_j\}_{j=1}^{n^{child}}$, the children of cluster $S$
  \begin{algorithmic}[1]
    \State{set best score to $b=0$}
    \For{\textbf{each} $n^{child}$ from
    $n_{min}^{child}$ to $n_{max}^{child}$}
    \State{apply the
    chosen clustering algorithm (e.g. K-medoids) with $n^{child}$
    clusters and metric $d$ to obtain the clusters
    $\{S_{j}\}_{j}^{n^{child}}$}
  \State{evaluate the estimator of the
    error $I$ for the refinement $\{S_{j}\}_{j}$, considering
    also the minimum and maximum reduced dimensions 
    $r_{min},r_{max}$}
  \If{$I > b$ \textbf{and} the
    minimum number of elements $n_{el}$ is not reached \textbf{and} the
    maximum number of clusters $K$ is not reached}
  \State{save the best
    refinement $\{S_{j}\}_{j}$ and update the best score $b$}
  \EndIf
  \EndFor
  \end{algorithmic}
\end{algorithm}

The procedure depends on many parameters that have to be tuned for the specific
case or depend \textit{a priori} on the application considered: the maximum
number of clusters ($K$), the minimum and maximum number of children nodes ($n_{min}^{child},\ n_{max}^{child}$), the
tolerance for the score on the whole domain ($\epsilon$), the minimun and maximum dimension
of the active subspace ($r_{min},r_{max}$), and the minimum number of elements ($n_{el}$) of each
cluster (usually $n_{el} > r$, where $r$ is the local AS dimension).

More importantly the method is versatile for the choice of clustering criterion,
indicator for refinement ($I$), distance metric ($d$, from equation~\eqref{eq:as_norm}) and regression method. In the following sections we
consider K-means and K-medoids with the active subspaces distance as
clustering criterion (see \cref{sec:kmedoids}), but other clustering algorithms
could in principle be applied at each refinement.

\begin{rmk}[Heuristics behind the choice of the active subspaces metric for K-medoids]
    \label{rmk:heuristics as distance}
    Having in mind that the optimal profile
    $h(\yy)=\mathbb{E}_{\boldsymbol{\mu}_{i}}[f\vert P_{r(S_i), i}](\yy)$ from
    \cref{def:local ridge approximation} is approximated as
    $h(\yy)=f(\yy)=f(P_{r}\mathbf{x})$ as reported in \cref{rmk:approx_opt_profile}, we can argue that clustering with the AS metric from
    \cref{eq:as_norm} is effective since, for this choice of the metric, the
    clusters tend to form transversally with respect to the active subspace
    directions. This is because the metric weights more the components with
    higher eigenvalues. So clustering with this metric reduces heuristically
    also the approximation error induced by the choice of the non-optimal
    profile.
\end{rmk}

Other clustering criterions employed must satisfy the subspace Poincar\'e
inequality for each cluster. Regarding the regression method we employ Gaussian
process regression with RBF-ARD kernel~\cite{williams2006gaussian}. The
procedure for response surface design with Gaussian processes and ridge
approximation with active subspaces can be found in \cite{constantine2015active,
romor2022kas}. As for the indicator for refinement (I), the local $R^2$ score in
\cref{rmk:local r2} is employed to measure the accuracy of the ridge
approximation against a validation dataset and the estimator from the RHS of
\cref{eq:indicator} is used to determine the dimension of the active subspace of
each cluster.

Here, we make some considerations about the complexity of the algorithm. For each refinement, considering an
intermediate cluster of $K$ elements, the most expensive tasks are: the active
subspace evaluation $O((K/m)np^2+(K/m)n^2p+n^3)$ (the first two costs refer to
matrix multiplications, while the third for eigendecomposition), the clustering
algorithm, for example K-medoids with AS distance $O(K(K-m)^2)$, and the
Gaussian process regression $O((K/m)^3p^3)$, where $p$ is the dimension of the
outputs and $m=n_{min}^{child}$ and $M=n_{max}^{child}$ are the minimum and maximum number of children clusters, for a more compact notation. In the worst case the height of the
refinement tree is $l=\log_{m}{N/n_{el}}$ where $n_{el}$ is the minimum number of
elements per cluster. In \cref{sec:complexity} we report the detailed
computational costs associated to each refinement level.

\section{Classification with local active subspace dimension}
\label{sec:classification}
A poor design of the parameter space could add an avoidable complexity to the
surrogate modeling algorithms. Often, in practical applications, each parameter
range is chosen independently with respect to the others. Then, it is the
responsibility of the surrogate modeling procedure to disentangle the
correlations among the parameters. However, in this way, looking at the response
surface from parameters to outputs, regions that present different degrees of
correlation are treated indistinctly. In this matter, a good practice is to
study as a preprocessing step some sensitivity measures, like the total Sobol'
indices~\cite{sullivan2015introduction} among groups of parameters, and split
the parameter space accordingly in order to avoid the use of more expensive
surrogate modeling techniques later.
Sobol' indices or the global active subspace sensitivity scores give
summary statistics on the whole domain. So in general, one could study the
parameter space more in detail, classifying nonlinearly regions with respect to
the complexity of the response surface, if there are enough samples to perform
such studies.

We introduce an effective approach to tackle the problem of classification of
the parameter space with respect to a local active subspace information. With
the latter we mean two possible alternatives.
\begin{definition}[Local active subspace dimension]
    \label{def:local as dimension}
    Given a threshold $\epsilon>0$, the pairs of inputs and gradients $\{(\XX_i,
        \mathbf{dY}_i)\}_{i}$ associated to an objective function of interest
        $f:\mathcal{X}\subset\mathbb{R}^n\rightarrow\mathbb{R}$, the size of the
        neighbourhood of sample points to consider $N\geq n$, and a subsampling
        parameter $p\in\mathbb{N},\ p\leq N$, the local active subspace
        dimension $r_i$ associated to a sample point $\XX_i\in\mathcal{X}$ is the
        positive integer
    \begin{equation*}
        r_i=\argmin_{1\leq r\leq  p} \left\{\text{tr}\left( (Id-P_r) \left(\frac{1}{p}\sum_{i\in J}
            \mathbf{dY}_i\otimes\mathbf{dY}_i\right) (Id-P_r) \right)
        \leq \epsilon \ \bigg\vert \ J\in C(N, p)\right\},
    \end{equation*}
    where $C(N, p)$ is the set of combinations without repetition of the $N$
    elements of the Euclidean neighbourhood of $\XX_i$ in $p$ classes and $P_r$ is the projection onto the first $r$ eingenvectors of the symmetric positive define matrix
    \begin{equation*}
      \frac{1}{p}\sum_{i\in J}
            \mathbf{dY}_i\otimes\mathbf{dY}_i.
  \end{equation*}
\end{definition}

\begin{definition}[Local active subspace]
    \label{def: local as}
    Given the pairs of inputs and gradients $\{(\XX_i, \mathbf{dY}_i)\}_{i}$
    associated to an objective function of interest
    $f:\mathcal{X}\subset\mathbb{R}^n\rightarrow\mathbb{R}$, the size of the
    neighbourhood  of sample points to consider $N\geq n$, and a fixed dimension
    $p\in\mathbb{N},\ 1\leq p\leq N$, the local active subspace $W_i$ associated
    to a sample point $\XX_i\in\mathcal{X}$ is the matrix of the first $p$
    eigenvectors of the spectral decomposition of
    \begin{equation}
        \frac{1}{N}\sum_{i\in U}\mathbf{dY}_i\otimes\mathbf{dY}_i,
    \end{equation}
    where $U$ is the neighbourhood  of sample points of $\XX_i$ with respect to the
    Euclidean distance. In practice, we choose $p$ close to the global active
    subspace dimension. The pairs $\{(\XX_i, W_i)\}_i$ can be thought as a
    discrete vector bundle of rank $p$ and $\{W_i\}_i$ can be thought as a
    subset of points of the Grassmannian $\text{Gr}(N, p)$, that
      is the set of $p$-dimensional subspaces in an $N$-dimensional
      vector space.
\end{definition}

Starting from the pairs of inputs-gradients $\{(\XX_i, \mathbf{dY}_i)\}_i$, the
procedure follows these steps:
\begin{enumerate}
    \item Each parameter sample is enriched with the additional feature
          corresponding to the local active subspace dimension from
          \cref{def:local as dimension} or the local active subspace from
          \cref{def: local as}, represented by the variable $\ZZ$.
    \item Each sample $\XX_i$ is labelled with an integer $l_i$ that will be used
          as classification label in the next step. To label the pairs $\{(\XX_i,
          \ZZ_i)\}_i$ we selected K-medoids with the Grassmannian metric
          \begin{equation}
            \label{eq:grassmannian distance}
              d((\XX_i, \ZZ_i), (\XX_j, \ZZ_j))= \lVert \ZZ_i - \ZZ_j \rVert_{F},
          \end{equation}
          where $\lVert\cdot\rVert_{F}$ is the Frobenius distance, in case
          $\ZZ_i$ represents the local active subspace or spectral clustering~\cite{murphy2012machine} in
          case $\ZZ_i$ is the local active subspace dimension. In the last case,
          the labels correspond to the connected components of the graph built
          on the nodes $\{(\XX_i, \ZZ_i)\}_i$ with adjacency list corresponding to
          the nearest nodes with respect to the distance
          \begin{equation}
            \label{eq: as local distance}
            d((\XX_i, \ZZ_i), (\XX_j, \ZZ_j))= \begin{cases}
                  \infty,                 & \ZZ_i\neq \ZZ_j \\
                  \lVert \XX_i - \XX_j\rVert, & \ZZ_i=\ZZ_j 
              \end{cases},
          \end{equation}
          where $\lVert\cdot\rVert$ is the Euclidean metric in $\mathbb{R}^n$.
          The connected components are obtained from the eigenvectors associated
          to the eigenvalue $0$ of the discrete Laplacian of the
          graph~\cite{murphy2012machine}. Summarizing, we employ two labelling methods: K-medoids in case $\mathbf{Z}_i$ represents the local active subspace (Definition~\ref{def: local as}) $W_i$ or spectral clustering in case $\mathbf{Z}_i$ represents the local active subspace dimension (Definition~\ref{def:local as dimension}).
    \item A classification method is applied to the inputs-labels pairs
          $\{(\XX_i, l_i)\}_i$. Generally, for our relatively simple applications
          we apply a multilayer perceptron with 1000 hidden nodes and 2 layers.
\end{enumerate}

\begin{rmk}[Grassmann distance]
    In general regarding the \cref{def: local as}, the dimension $p$ could be
    varying among samples $\XX_i$ and one could use a more general distance with
    respect to the one from \cref{eq:grassmannian distance} that can have as
    arguments two vectorial subspaces of possibly different and arbitrary large
    dimensions.
\end{rmk}

\begin{rmk}[Gradient-free active subspace]
    In general both the response surface design and the classification procedure
    above can be carried out from the pairs $\{(\XX_i, \YY_i)\}_i$ of inputs,
    outputs instead of the sets $\{(\XX_i, \mathbf{dY}_i)\}_i$ of inputs,
    gradients. In fact, the gradients $\{\mathbf{dY}_i\}$ can be approximated in
    many different ways~\cite{constantine2015active} from $\{(\XX_i, \YY_i)\}_i$.
    In the numerical results in ~\cref{sec:results} when the gradients are not
    available they are approximated with the gradients of the local
    one-dimensional polynomial regression built on top of the neighbouring
    samples.
  \end{rmk}

  \begin{algorithm}[htb]
\caption{Classification with local features from the AS information.}
\label{alg:classification}

\hspace*{\algorithmicindent} \textbf{Input}: inputs-gradients pairs $\{(\XX_i,
\mathbf{dY}_i)\}_{i\in I}$ as training dataset\\
\hspace*{\algorithmicindent}\hspace*{35pt}  local features based on AS
information $\{\ZZ_i\}_{i\in I}$ \\
\hspace*{\algorithmicindent}\hspace*{35pt} labelling method based on the
  distance $d$ from~\cref{eq:grassmannian distance} or \cref{eq: as local
  distance}\\
\hspace*{\algorithmicindent}\hspace*{35pt} classification method taking 
the inputs-labels pairs $\{(\XX_i, l_i)\}_{i\in I}$\\
\hspace*{\algorithmicindent}
\textbf{Output}: predictor for new test inputs and classes of the training
dataset.

\begin{algorithmic}[1]
  \For{\textbf{each} $i\in I$}
  \State{evaluate feature $\ZZ_i$ from $(\XX_i,
    \mathbf{dY}_i)$ and the neighbouring points of $\XX_i$}
  \EndFor
  \State{initialize the $\vert I \vert \times \vert I \vert$ distance
    matrix $M$ associated to $\{(\XX_i, \ZZ_i)\}_{i\in I}$}
  \For{\textbf{each} $i \in I$}
  \For{\textbf{each} $i\leq j \in I$}
  \State{$M(i, j)=d((\XX_i, \ZZ_i), (\XX_j,
    \ZZ_j))$}
  \EndFor
  \EndFor
  \State{use the labelling method with input $M$, to
    assign a label $l_i$ for each $(\XX_i, \ZZ_i)$}
  \State{train the
  classification method with the inputs-labels pairs $\{(\XX_i,
  l_i)\}_{i\in I}$}
\end{algorithmic}
\end{algorithm}

\section{Numerical results}
\label{sec:results}

In this section we apply the proposed localized AS method to some datasets of
increasing complexity. We emphasize that the complexity is not only
defined by the number of parameters but also by the intrinsic
dimensionality of the problem. We compare the clustering techniques presented in
\cref{sec:local}, and we show how the active subspaces-based distance metric
outperforms the Euclidean one for those functions which present a global lower
intrinsic dimensionality. We remark that for hierarchical top-down clustering we
can use both metrics, and we always show the best case for the specific dataset.

We start from a bidimensional example for which we can plot the clusters and the
regressions, and compare the different techniques. Even if it is not a case for
which one should use parameter space dimensionality reduction we think it could
be very useful for the reader to understand also visually all the proposed
techniques. For the higher dimensional examples we compare the accuracy of the
methods in terms of $R^2$ score and classification performance. All the
computations regarding AS are done with the open source Python
package\footnote{Available at \url{https://github.com/mathLab/ATHENA/}.} called
ATHENA~\cite{romor2020athena}, for the classification algorithms we use the
scikit-learn package~\cite{sklearn_api}, and for the Gaussian process regression
GPy~\cite{gpy2014}.

We suppose the domain $\mathcal{X}$ to be a
$n$-dimensional hyperrectangle. we are going to rescale the input
parameters $\XX$ to $[-1, 1]^n$.

\subsection{Some illustrative bidimensional examples}
We start by presenting two bidimensional test cases  to show
every aspect of the methodology together with illustrative plots. First
we analyse a case where a global active subspace, even if present, does not
provide a regression accurate enough along the active direction, in
\cref{sec:quartic}. Then we consider a radial symmetric function for which, by
construction, an AS does not exist, in \cref{sec:cosine}, and the use of K-means
is instead preferable since we cannot exploit a privileged direction in the
input domain.

\subsubsection{Quartic function}
\label{sec:quartic}

Let us consider the following bidimensional quartic function $f(\xx) = x_1^4 -
x_2^4$, with $\xx = (x_1, x_2) \in [0, 1]^2$. In \cref{fig:quartic_as_global} we
can see the contour plot of the function, the active subspace direction ---
translated for illustrative reasons ---  and the corresponding sufficient
summary plot of the global active subspace, computed using $400$ uniformly
distributed samples. With sufficient summary plot we intend $f(\xx)$ plotted
against the input parameters projected onto the active subspace, that is $W_1^T
\xx$. It is clear how, in this case, a univariate regression does not produce any
useful prediction capability.

\begin{figure}[h]
\centering
\includegraphics[width=.95\textwidth]{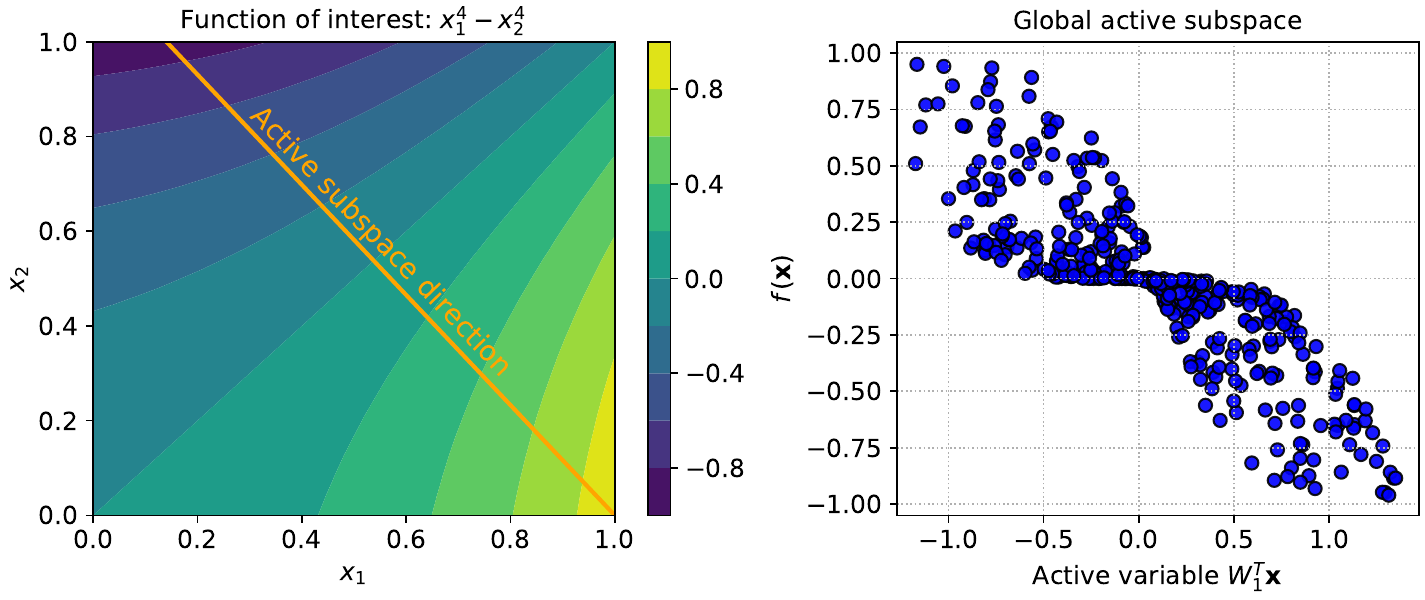}
\caption{On the left panel the contour plot of the quartic function and in
  orange the global active subspace direction. On the right panel the sufficient
  summary plot resulting projecting the data onto the global
  AS.\label{fig:quartic_as_global}}
\end{figure}

Let us apply the clustering techniques introduced in the previous sections
fixing the number of clusters to $4$. In \cref{fig:quartic_4_clusters} we can
clearly see how the supervised distance metric in \cref{eq:as_norm} acts in
dividing the input parameters. On the left panel we apply K-means which clusters
the data into $4$ uniform quadrants, while in the middle and right panels we
have K-medoids and hierarchical top-down, respectively, with a subdivision
aligned with the global AS. We notice that for this simple case the new metric
induces an identical clustering of the data. In \cref{fig:local_ssp_quartic} we
plotted the sufficient summary plots for each of the clusters individuated by
K-medoids or hierarchical top-down in
\cref{fig:quartic_4_clusters}. By using a single univariate regression
for each cluster the $R^2$ score 
improves a lot with respect to a global approach (see right panel of
\cref{fig:quartic_as_global}).

\begin{figure}[h]
\centering
\includegraphics[width=1.\textwidth]{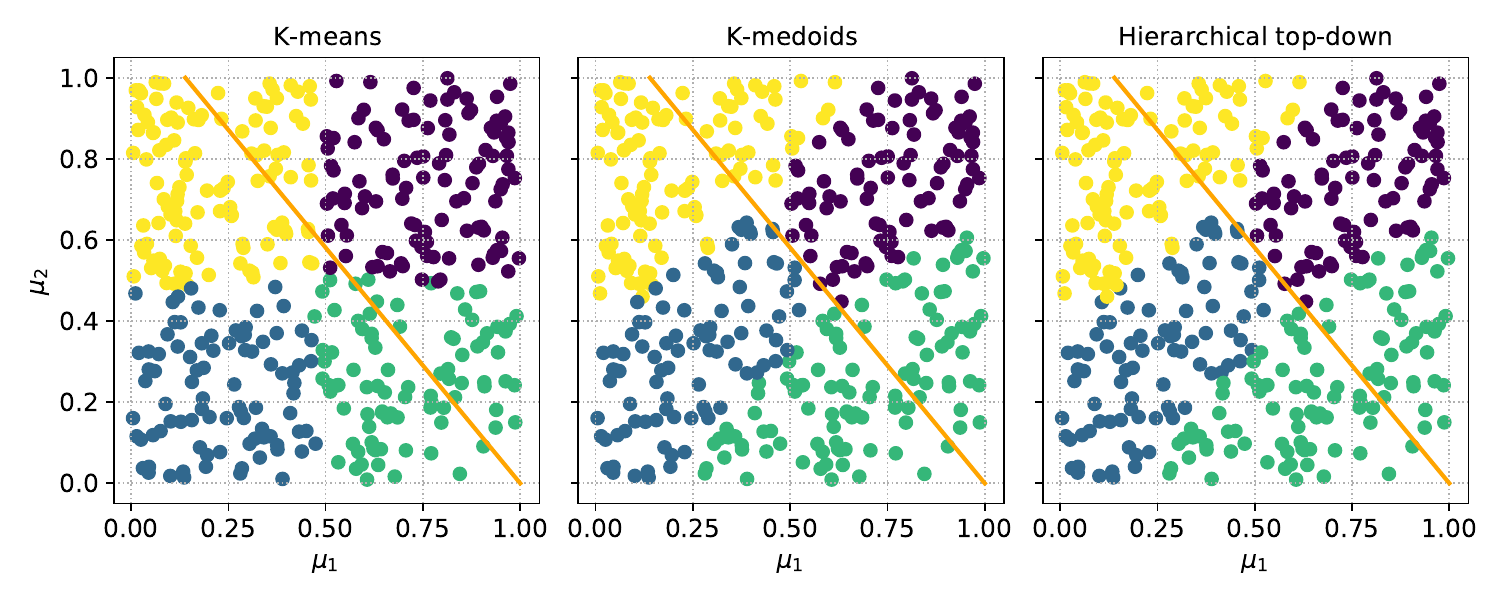}
\caption{Comparison between the different clusters obtained by K-means (on the
  left), K-medoids (middle panel), and hierarchical top-down (on the right) with
  AS induced distance metric defined in \cref{eq:as_norm} for the quartic test
  function. In orange the global active subspace direction. Every cluster is
  depicted in a different color.\label{fig:quartic_4_clusters}}
\end{figure}

\begin{figure}[h]
\centering
\includegraphics[width=1.\textwidth]{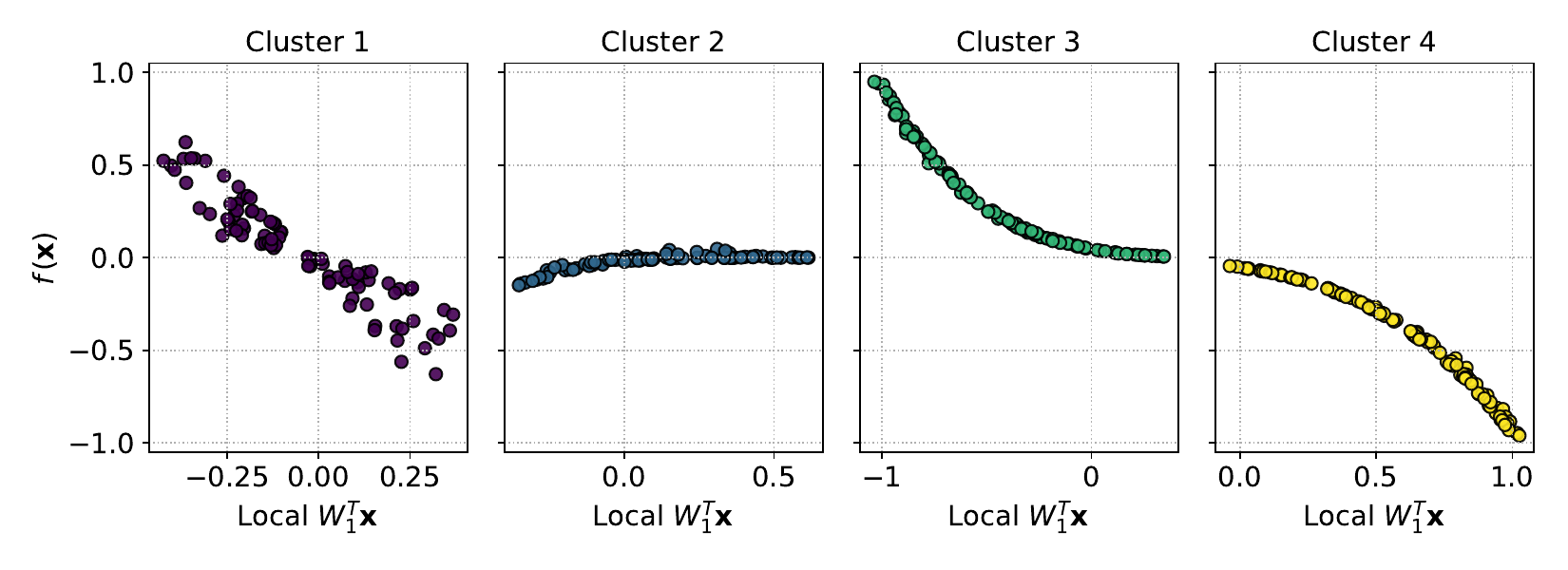}
\caption{Local sufficient summary plots for the $4$ clusters individuated by
  K-medoids or hierarchical top-down in \cref{fig:quartic_4_clusters} (colors
  correspond).\label{fig:local_ssp_quartic}}
\end{figure}

We can also compare the $R^2$ scores for all the methods, using a test datasets
of $600$ samples. In \cref{fig:quartic_r2} we report the scores for K-means,
K-medoids and for hierarchical top-down with AS-based distance metric. The score
for the global AS, which is $0.78$, is not reported in \cref{fig:quartic_r2} for
illustrative reasons. The results are very similar due to the relatively simple
test case, but we can see that even with $2$ clusters the gain in accuracy is
around $23\%$ using the metric in \cref{eq:as_norm}.

The hierarchical top-down clustering method was ran with the following
hyper-parameters: the total number of clusters is increasing from $2$ to $10$,
the minimum number of children equal to the maximum number of children equal to
$3$, uniform normalization of the clusters, the minimum size of each cluster is
$10$ elements, the clustering method is K-medoids with AS distance, the maximum
active subspace dimension is $1$.

\begin{figure}[h]
\centering
\includegraphics[width=.83\textwidth]{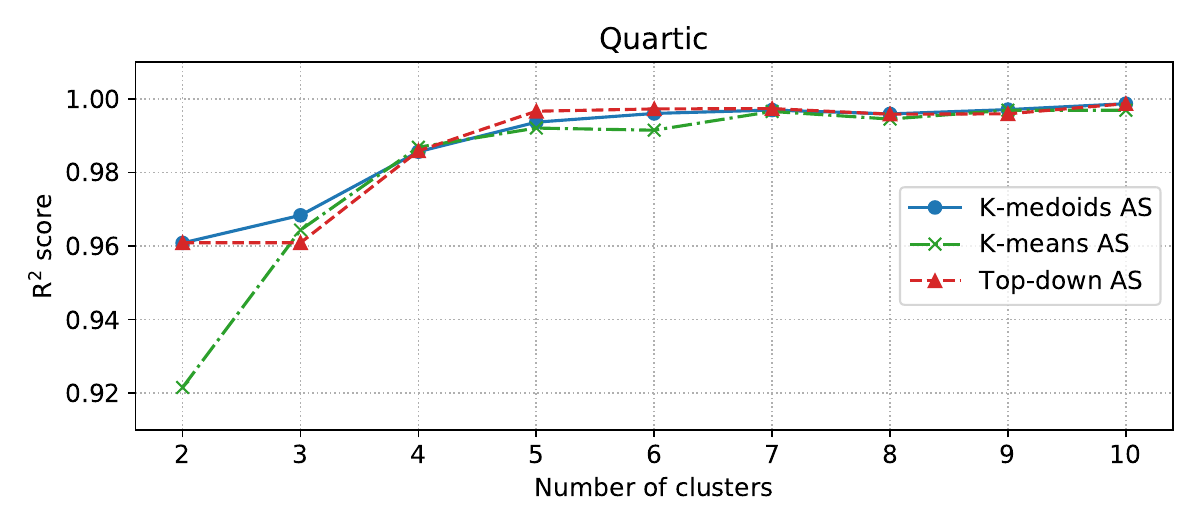}
\caption{$R^2$ scores comparison between local versions varying the number of
  clusters for the quartic function. Global AS has a score equal to
  $0.78$.\label{fig:quartic_r2}}
\end{figure}

Then we want to increase the accuracy of the regression for a fixed
number of clusters equal to $3$,
loosing in some regions the reduction in the parameter space. Starting from the
clustering with hierarchical top-down and $3$ clusters of dimension $1$, the AS
dimension of each of the 3 clusters is increased if the threshold of $0.95$ on
the local $R^2$ score is not met. In general, the local $R^2$ score is evaluated
on a validation set, for which predictions from the local response surfaces are
obtained, after each validation sample is classified into one of the $3$
clusters.

\begin{figure}
\centering
\includegraphics[width=.95\textwidth]{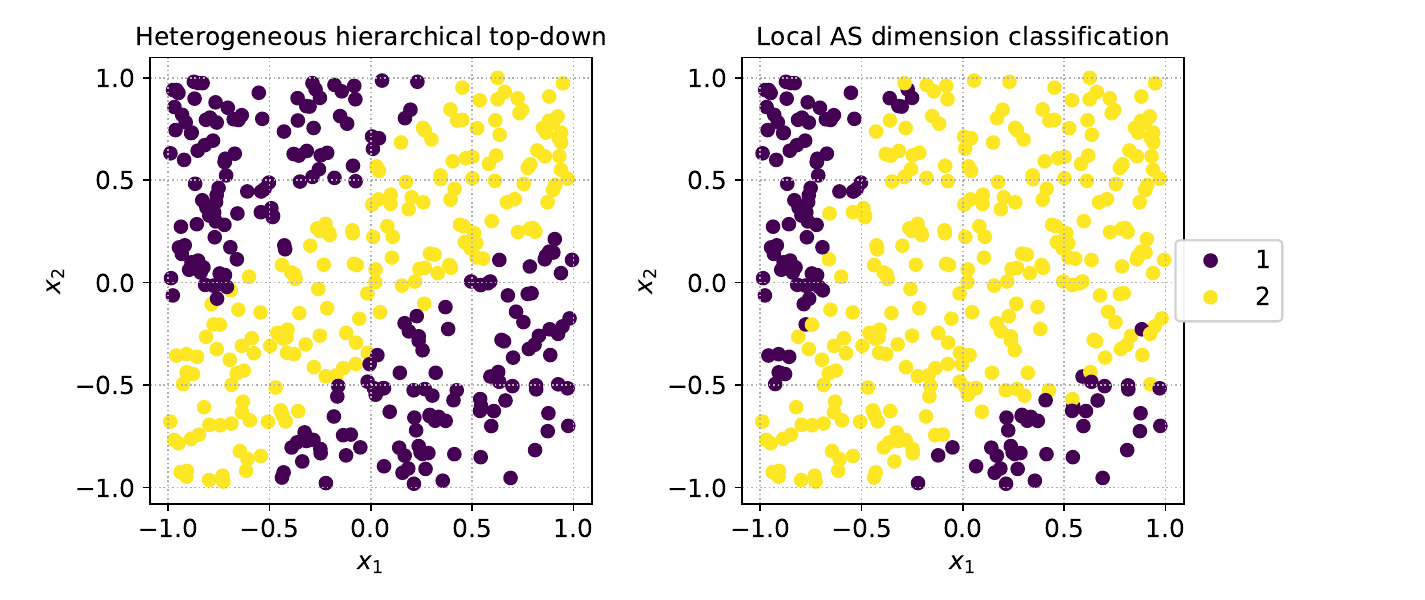}
\caption{On the left panel the hierarchical top-down clustering with
heterogeneous AS dimension and $R^2$ score equal to $1$. On the right panel the
labels of the local AS dimension from \cref{def:local as
dimension}.}\label{fig:quartic_classification_local_as}
\end{figure}

The $3$ clusters are reported in \cref{fig:quartic_classification_local_as} on
the left. The $R^2$ score on the test set is $1$, instead of around $0.97$ from
\cref{fig:quartic_r2}. To obtain this result, the central cluster AS dimension
is increased from $1$ to $2$. We compare the clustering with respect to the
classification of the local AS dimension with \cref{alg:classification} using as
features the local AS dimension as defined in \cref{def:local as dimension}, on
the right of \cref{fig:quartic_classification_local_as}. Actually,
\cref{alg:classification} is stopped after the plotted labels are obtained as
the connected components of the underlying graph to which spectral clustering is
applied: no classification method is employed, yet. It can be seen that
hierarchical top-down clustering with heterogeneous AS dimension is more
efficient with respect to the classes of \cref{alg:classification}, regarding
the number of samples associated to a response surface of dimension $2$.

\subsubsection{Radial symmetric cosine}
\label{sec:cosine}
This example addresses the case for which an active subspace is not present.
This is due to the fact that there are no preferred directions in the input
domain since the function $f$ has a radial symmetry.
For this case the exploitation of the supervised distance metric does not
provide any significant gain and K-means clustering works better on average,
since it does not use the global AS structure. The model function we consider is
$f(\xx) = \cos (\| \xx \|^2)$, with $\xx \in [-3, 3]^2$.


In \cref{fig:isotropic_r2} we compare the $R^2$ scores for K-means, K-medoids
with AS-based metric, and hierarchical top-down with Euclidean metric. We used
$500$ training samples and $500$ test samples. We see K-medoids has not a clear
behaviour with respect to the number of clusters, while the other methods
present a monotonic trend and better results on average, especially K-means. On
the other hand local models improve the accuracy considerably, even for a small
number of clusters, with respect to a global model.
\begin{figure}[h]
\centering
\includegraphics[width=.83\textwidth]{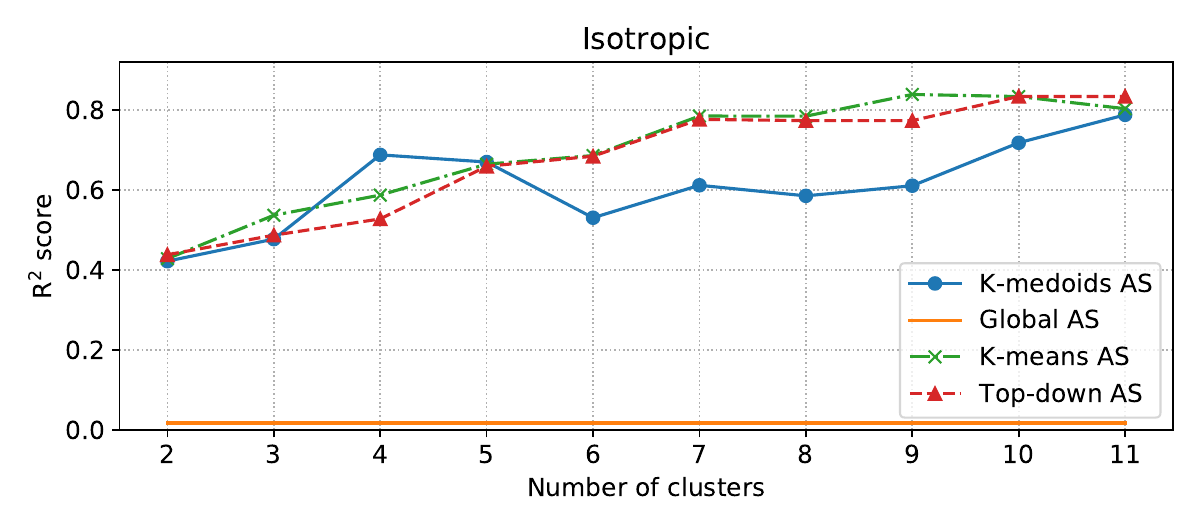}
\caption{$R^2$ scores comparison between global AS and local versions varying
  the number of clusters for the isotropic model function. Global AS corresponds
  to no clustering.\label{fig:isotropic_r2}}
\end{figure}
In this case the specifics of hierarchical top-down clustering are: the minimum
number of children is equal to the maximum, the minimum number of elements per
cluster is $10$, the clustering method chosen is K-means, the normalization
employed it the uniform one, and the total number of clusters is increasing from
$2$ to $11$.

\subsection{Higher-dimensional datasets}
In this section we consider more interesting benchmarks, for which
dimension reduction in the parameter space is useful since the starting
dimension of the parameter space is higher. We test the classification procedure
in \cref{alg:classification} with an objective function with $6$ parameters and
defined piecewise as a paraboloid with different AS dimensions. We also test the
procedure of response surface design with local AS, with a classical
$8$-dimensional epidemic benchmark model.

\subsubsection{Multi-dimensional hyper-paraboloid}
\label{sec:hyper-paraboloid}
The objective function $f:[-4, 4]^6\rightarrow\mathbb{R}$ we consider is defined
piecewise as follows

\begin{equation}
  f(x) = \begin{cases}
    x_1^2 & \text{if}\ x_1 > 0 \ \text{and}\ x_2 > 0, \\
    x_1^2+x_2^2 & \text{if}\ x_1 < 0 \ \text{and}\ x_2 > 0, \\
    x_1^2+x_2^2+x_3^2 & \text{if}\ x_1 > 0 \ \text{and}\ x_2 < 0, \\
    x_1^2+x_2^2+x_3^2+x_4^2 & \text{if}\ x_1 < 0 \ \text{and}\ x_2 < 0.
  \end{cases}
\end{equation}

In the $4$ domains in which $f$ is defined differently, we expect an AS
dimension ranging from $1$ to $4$, respectively. We employed
\cref{alg:classification} using the local AS dimensions as additional features,
from  \cref{def:local as dimension}: the values of the hyper-parameters are the
following: $\epsilon=0.999$, $N=6$, $p=4$. In \cref{fig:accuracy_study} we plot
the accuracy of the classification of the labels, associated to the connected
components of the graph built as described in \cref{alg:classification}, and
also the accuracy of the classification of the local active subspace dimension,
that takes the values from $1$ to $4$. The test dataset for both the
classification errors has size $1000$. The score chosen to asses the quality of
the classification is the mean accuracy, that is the number of correctly
predicted labels over the total number of labels. For both the classification
tasks $100$ train samples are enough to achieve a mean accuracy above $80\%$.

\begin{figure}[h]
  \centering
  \includegraphics[width=.83\textwidth]{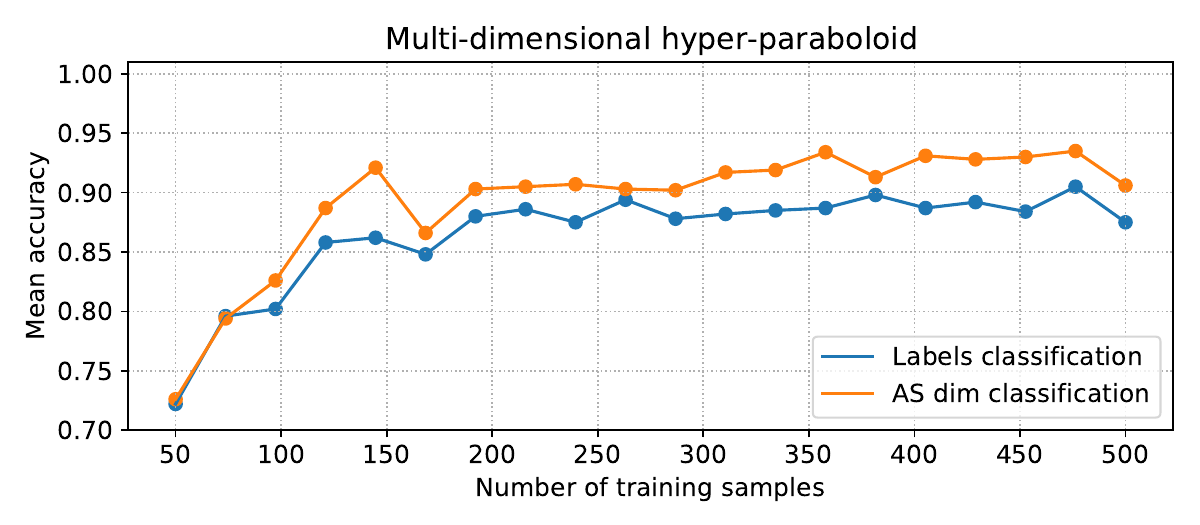}
  \caption{Mean accuracy study for a training dataset increasing in size from
  $50$ to $500$ samples. The test set is made of $1000$ independent samples. The
  classification accuracy for the procedures of connected component
  classification (in blue) and local AS dimension classification (in orange) are
  both shown.\label{fig:accuracy_study}}
\end{figure}

We remark that every step is applied to a dataset of samples in a parameter
space of dimension $6$, even if, to get a qualitative idea of the performances
of the method, in \cref{fig:qualitative plot classification} we show only the
first two components of the decision boundaries of the $4$ classes for both the
previously described classification problems.

\begin{figure}[h]
  \centering
  \includegraphics[width=0.49\textwidth]{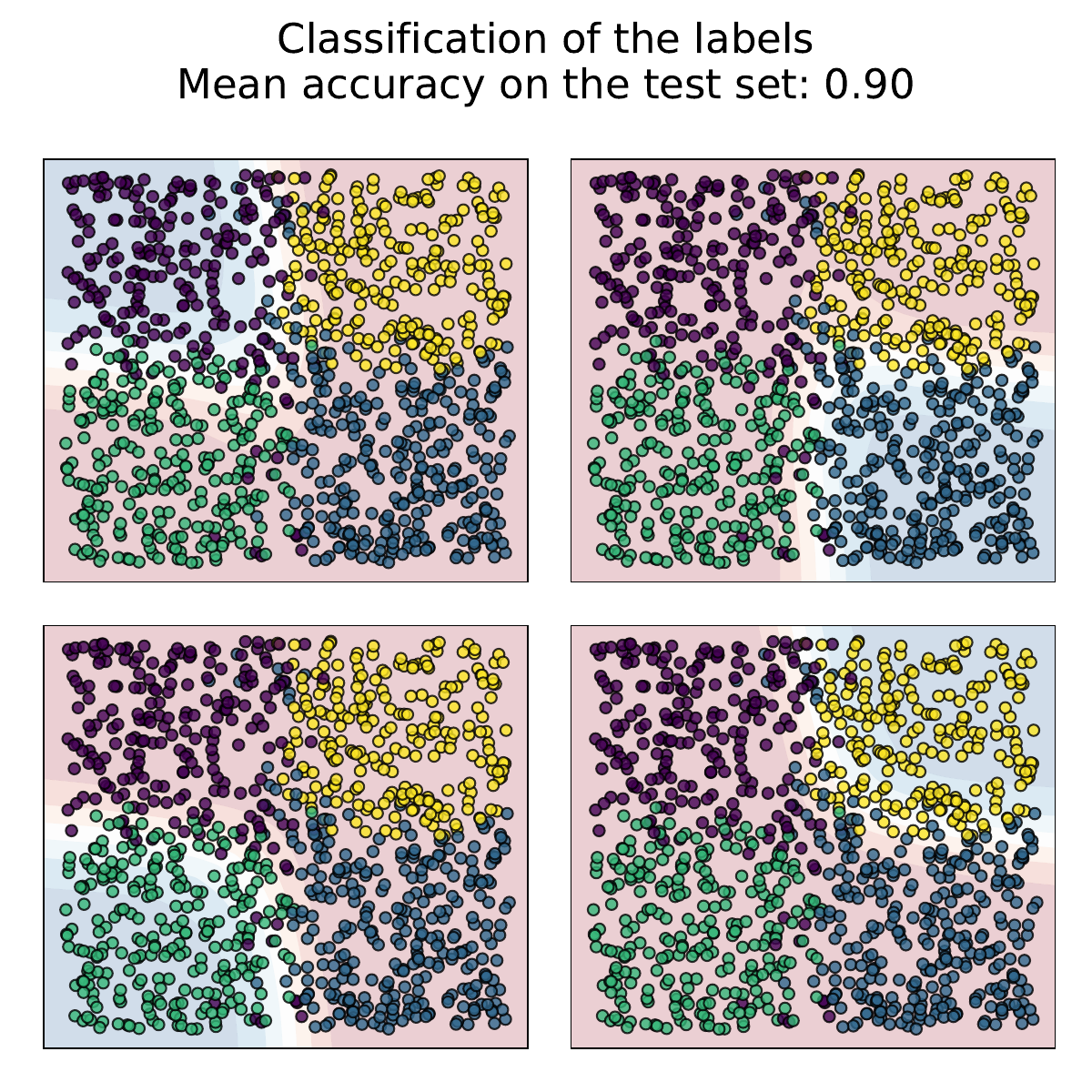}\hfill
  \includegraphics[width=0.49\textwidth]{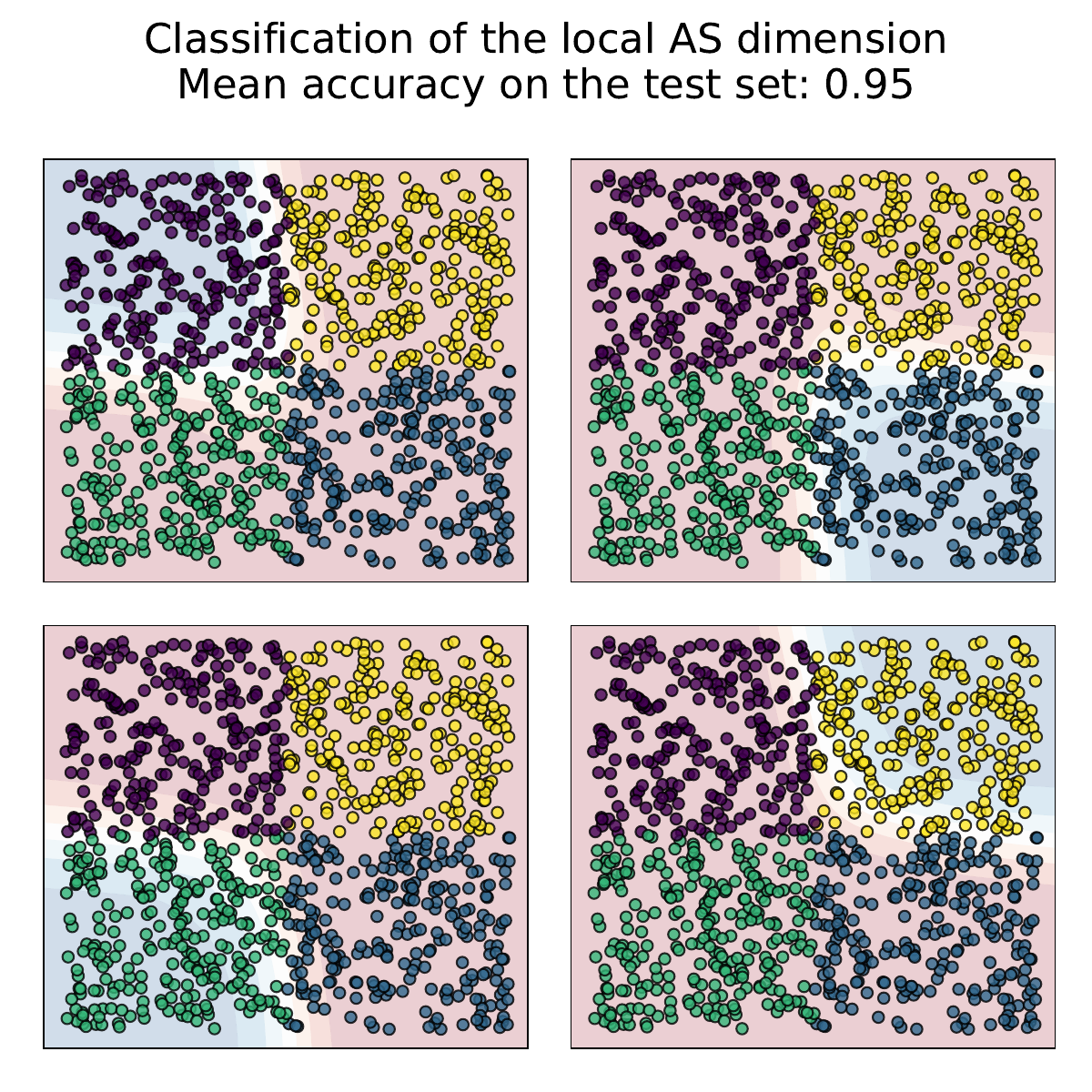}
  \caption{On the left panel, the decision boundaries of the $4$ classes
  associated to the connected components of the graph built as described in
  \cref{alg:classification}. On the right panel, the decision boundaries of the 4
  classes associated to the local AS dimension from $1$ to $4$. The datasets has
  dimension $6$, only the first two components of the decision boundaries and of
  the test samples are plotted.\label{fig:qualitative plot classification}}
\end{figure}

\subsubsection{Ebola epidemic model}
\label{sec:ebola}
In this section we examine the performance of the proposed methods over the
dataset created with the SEIR model for the spread of Ebola\footnote{The dataset
was taken from \url{https://github.com/paulcon/as-data-sets}.}. The output of
interest in this case is the basic reproduction number $R_0$ of the SEIR model,
described in~\cite{diaz2018modified}, which is computed using $8$ parameters as
follows
\begin{equation}
\label{eq:ebola}
  R_0 =\frac{\beta_1 +\frac{\beta_2\rho_1 \gamma_1}{\omega} +
  \frac{\beta_3}{\gamma_2} \psi}{\gamma_1+ \psi}.
\end{equation}
As shown in previous works, this function has a lower intrinsic dimensionality,
and thus a meaningful active subspace, in particular of dimension $1$. To
evaluate the performance of the local AS we compute the $R^2$ score, as in
\cref{eq:r2_def}, varying the number of clusters from $2$ to $10$ for all the
methods presented. The test and training datasets are composed by $500$ and
$300$, respectively, uniformly distributed and independent samples. The results
are reported in \cref{fig:ebola_r2}, where as baseline we reported the $R^2$ for
the GPR over the global AS. We can see how the use of the AS-based distance
metric contributes the most with respect to the actual clustering method
(compare K-medoids and hierarchical top-down in the plot). K-means, instead,
does not guarantee an improved accuracy (for $4$ and $9$ clusters), and in
general the gain is limited with respect to the other methods, especially for a
small number of clusters which is the most common case in practice, since
usually we work in a data scarcity regime. The results for K-medoids and
top-down are remarkable even for a small amount of clusters with an $R^2$ above
$0.9$ and an improvement over $10\%$ with respect to the global AS, which means
that no clustering has been used.

The hyper-parameters for the hierarchical top-down algorithm are the following:
the maximum local active subspace dimension is $1$, the maximum number of
children is equal to the number of total clusters, the minimum number of
children is $2$ at each refinement level, the minimum number of elements per
cluster is $10$, and the clustering method for each refinement is K-medoids with
AS distance.

\begin{figure}[h]
\centering
\includegraphics[width=.83\textwidth]{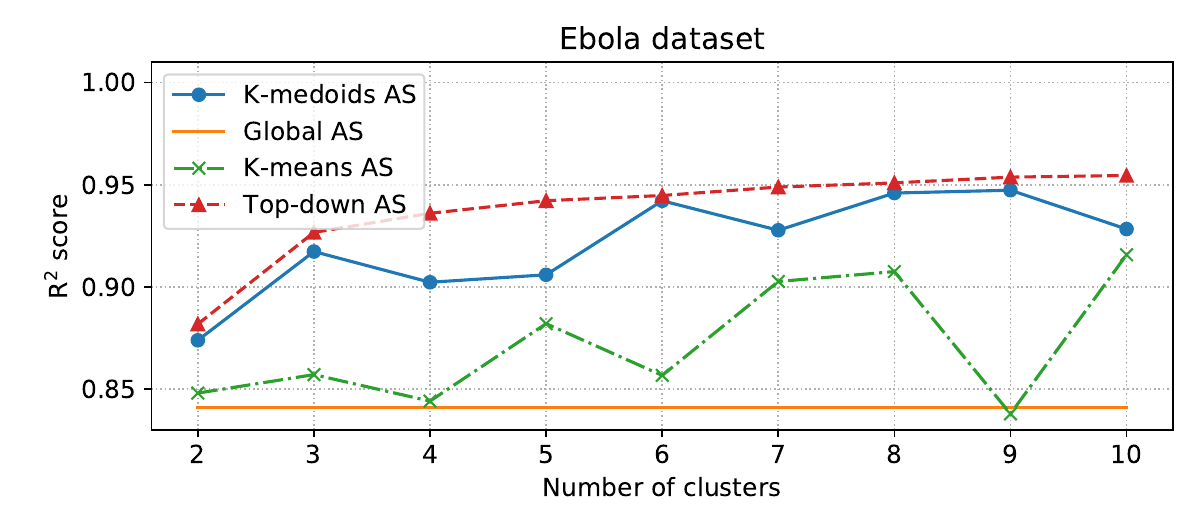}
\caption{$R^2$ scores comparison between global AS and local versions varying
  the number of clusters for the Ebola spread model. Global AS corresponds to no
  clustering.\label{fig:ebola_r2}}
\end{figure}

\subsection{Datasets with vectorial outputs}
In this section we want to show how hierarchical top-down clustering and the
classification procedure of \cref{alg:classification} can be combined to improve
the overall reduction in the parameter space, for a fixed lower threshold in the
$R^2$ score. For the response surface design with active subspaces for vectorial
outputs we refer to~\cite{zahm2020gradient, romor2022kas}.

\subsubsection{Poisson equation with random diffusivity}
\label{sec:poisson}
Let us consider the stochastic Poisson problem on the square $\xx=(x,
y)\in\Omega := [0, 1]^2$, defined as:
\begin{equation}
\label{eq:sPDE}
\begin{cases}
-\nabla\cdot (\kappa\ \nabla u)=1, & \xx \in\Omega, \\
u = 0, & \xx\in\partial\Omega_{\text{top}}\cup\partial\Omega_{\text{bottom}},\\
u = 10 y(1-y), &\xx \in\partial\Omega_{\text{left}},\\
\mathbf{n}\cdot\nabla u = 0, & \xx \in\partial\Omega_{\text{right}},
\end{cases}
\end{equation}
with homogeneous Neumann boundary condition on $\partial\Omega_{\text{right}}$,
and Dirichlet boundary conditions on the remaining part of $\partial\Omega$. The
diffusion coefficient $\kappa:(\Omega, \mathcal{A}, P)\times\Omega\rightarrow
\RR$, with $\mathcal{A}$ denoting a $\sigma$-algebra, is such that $\log(\kappa)$
is a Gaussian random field, with covariance function $G(\xx,\yy)$ defined by
\begin{equation}
G(\xx, \yy) = \exp\left(-\frac{\lVert \xx - \yy \rVert^{2}}{\beta^{2}}
\right),\quad \forall \, \xx,\yy\in\Omega,
\end{equation}
where the correlation length is $\beta=0.03$. We approximate this random field
with the truncated Karhunen–Lo\`eve decomposition as
\begin{equation}
\kappa(s, \xx) \approx \exp\left(\sum_{i=1}^m X_i(s)
  \gamma_{i} \boldsymbol{\psi}_i (\xx) \right) , \qquad\forall (s,
\xx) \in \Omega\times\Omega,
\end{equation}
 where $(X_{i})_{i\in 1,\dots, m}$ are independent standard normal distributed
 random variables, and the eigenpairs of the Karhunen–Lo\`eve decomposition of
 the zero-mean random field $\kappa$ are denoted with $(\gamma_{i},
 \boldsymbol{\psi}_{i})_{i\in 1,\dots, m}$. The parameters $(X_{i})_{i\in
 1,\dots, m=10}$ sampled from a standard normal distribution are the
 coefficients of the Karhunen-Lo\`eve expansion, truncated at the first $10$
 modes, so the parameter space has dimension $m=10$.

The domain $\Omega$ is discretized with a triangular unstructured mesh
$\mathcal{T}$ with $3194$ triangles. The simulations are carried out with the
finite element method with polynomial order $1$. The solution $u$ is evaluated
at $1668$ degrees of freedom, thus the output is vectorial with dimension
$d=1668$. As done in~\cite{zahm2020gradient, romor2022kas}, the output is
enriched with the metric induced by the Sobolev space $H^1(\Omega)$ on to the
finite element space of polynomial order $1$: the metric is thus represented by
a $d\times d$ matrix $M$ obtained as the sum of the mass and stiffness matrices
of the numerical scheme and it is involved in the AS procedure when computing
the correlation matrix $\mathbb{E}\left [ Df\ M\ Df^{T} \right ]$, where $Df$ is
the $m\times d$ Jacobian matrix of the objective function $f:\RR^{10}\to \RR^d$, that maps the first $m=10$ coefficients of the Karhunen-Lo\'eve expansion $(X_i)_{i\in 1,\dots, m}$ to the solution $u$. The Jacobian matrix is evaluated for each set of parametric instances with the adjoint method, as in~\cite{romor2022kas}.

Since the output is high-dimensional we classified with
\cref{alg:classification} the output space in $6$ clusters, using the Grassmann
distance from \cref{eq:grassmannian distance}, as shown in
\cref{fig:clusters_spoisson}.

\begin{figure}[h]
  \centering
  \includegraphics[width=.6\textwidth]{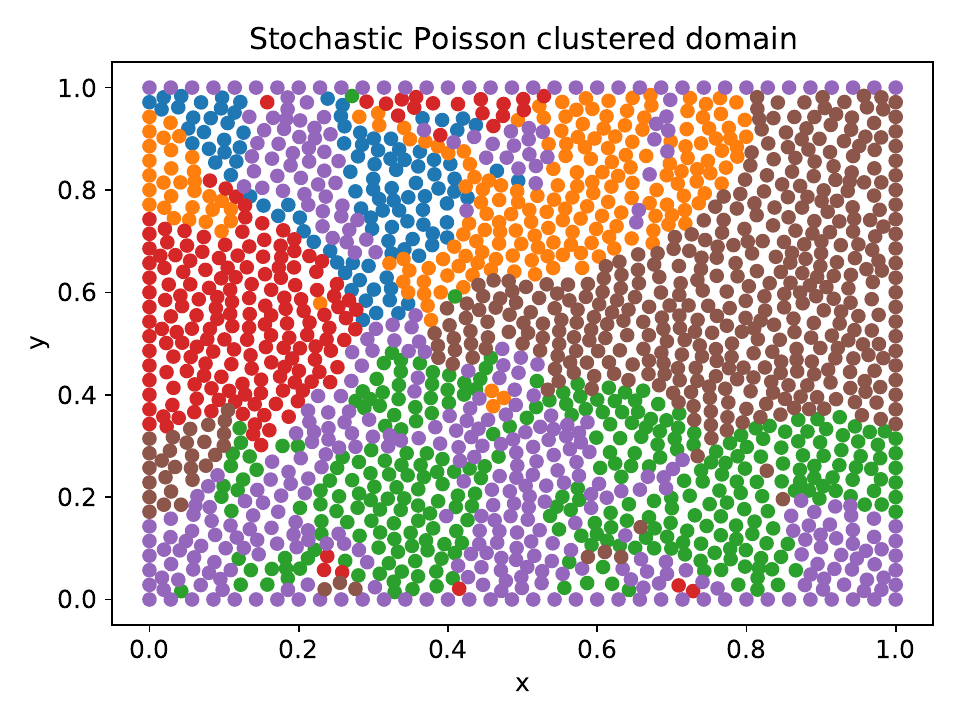}
  \caption{Subdivision of the spatial domain $\Omega$ in $6$ clusters based on
  the Grassmann distance from \cref{def: local as}, i.e. the clusters correspond
  to the connected components of the graph built on top of the degrees of
  freedom with adjacency list determined using as distance \cref{def: local
  as}.\label{fig:clusters_spoisson}}
  \end{figure}

Afterwards we applied hierarchical top-down clustering to every one of the $6$
triplets of inputs-outputs-gradients, obtained restricting the outputs and the
gradients to each one of the $6$ clusters. The specifics of hierarchical
top-down clustering we employed are the following: the minimum and maximum
number of children for each refinement are equal to the total number of
clusters, which is $4$, the minimum number of elements in each cluster is $10$,
and the clustering algorithm chosen is K-medoids with the AS distance. The size
of the training and test datasets is respectively of $500$ and $150$. The
gradients are evaluated with the adjoint method. Since the output is vectorial
we employed the mean $R^2$ score, where the average is made among the components
of the vectorial output considered.

Then for every lower threshold on the $R^2$ score we increase one by one the
dimension of the $6\times 4$ local clusters, until all the $R^2$ scores of each
of the $6$ triplets are above the fixed threshold. The same procedure is applied
to the whole dataset of inputs-outputs-gradients but executing hierarchical
top-down clustering just once, for all the output's components altogether.

\begin{figure}[h]
\centering
\includegraphics[width=.83\textwidth]{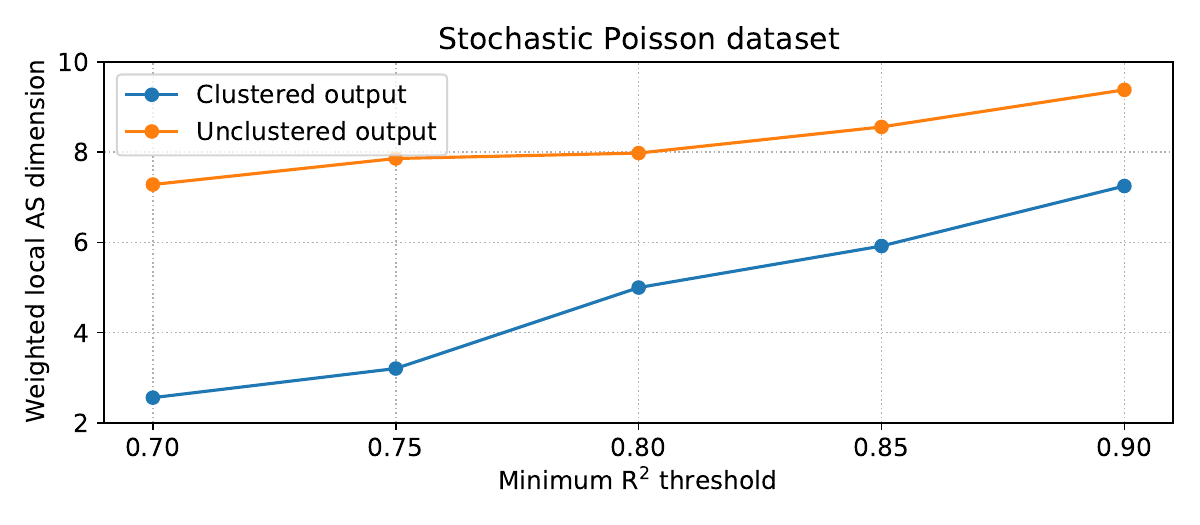}
\caption{In orange the local AS dimensions weighted on the number of elements of
each of the $4$ clusters in the parameter space, obtained with hierarchical
top-down clustering. In blue the local AS dimensions weighted on the number of
elements of each of the $4$ clusters in the parameter space, obtained with
hierarchical top-down clustering, times $6$ clustered outputs (see
\cref{fig:clusters_spoisson}) for a total of $24$ terms in the weighted
average.\label{fig:laplace_r2}}
\end{figure}

The results are reported in \cref{fig:laplace_r2}. In the case of the clustered
outputs, the local dimension of each one of the $6$ clustered outputs times $4$
local clusters in the parameter space, for a total of $24$ local clusters, are
weighted with the number of elements of each cluster. In the same way the $4$
clusters of the case with unclustered outputs is weighted with the number of the
elements of each one of the $4$ clusters. It can be seen that for every fixed
threshold, there is an evident gain, with respect to the dimension reduction in
the parameter space, in clustering the outputs and then performing hierarchical
top-down clustering in the parameter space.

\subsubsection{Shape design of an airfoil}
\label{sec:naca}
For this vectorial test case we consider the temporal evolution of the lift
coefficient of a parametrized NACA airfoil. Here we briefly present the problem
we solve to create the dataset, we refer to~\cite{tezzele2020enhancing} for a
deeper description.

Let us consider the unsteady incompressible Navier-Stokes equations described in
an Eulerian framework on a parametrized space-time domain $S(\mupar) =
\Omega(\mupar) \times [0,T] \subset \RR^2\times\RR^+$. The vectorial velocity
field $\mathbf{u}: S(\mupar) \to \RR^2$, and the scalar pressure field $p:
S(\mupar) \to \RR$ solve the following parametric PDE:
\begin{equation}
\label{eq:mortech_navstokes}
\begin{cases}
\mathbf{u_t}+ \nabla \cdot (\mathbf{u} \otimes \mathbf{u})- \nabla \cdot
2 \nu \mathbf{\nabla}^s \mathbf{u} = - \nabla p &\mbox{ in } S(\mupar),\\
\nabla \cdot \mathbf{u} = \mathbf{0} &\mbox{ in } S(\mupar),\\
\mathbf{u} (t,\xx) = \mathbf{f}(\xx) &\mbox{ on } \Gamma_{\text{in}} \times [0,T],\\
\mathbf{u} (t,\xx) = \mathbf{0} &\mbox{ on } \Gamma_{0}(\mupar) \times [0,T],\\
(\nu \nabla \mathbf{u} - p \mathbf{I} ) \mathbf{n} = \mathbf{0} &\mbox{ on } \Gamma_{\text{out}} \times [0,T],\\
\mathbf{u}(0,\xx) = \mathbf{k}(\xx) &\mbox{ in } S(\mupar)_0\\
\end{cases}.
\end{equation}

Here, $\Gamma = \Gamma_{\text{in}} \cup \Gamma_{\text{out}} \cup \Gamma_{0}$
denotes the boundary of $\Omega(\mupar)$ composed by inlet boundary, outlet
boundary, and physical walls, respectively. With $\mathbf{f}(\xx)$ we indicate
the stationary non-homogeneous boundary condition, and with $\mathbf{k}(\xx)$ the
initial condition for the velocity at $t=0$. The geometrical deformation are
applied to the boundary $\Gamma_0(\mupar)$. The undeformed configuration
corresponds to the NACA~4412 wing profile~\cite{abbott2012theory,
jacobs1933characteristics}. To alter such geometry, we adopt the shape
parametrization and morphing technique proposed in~\cite{hicks1978wing}, where
$5$ shape functions are added to the airfoil profiles. They are
commonly called Hicks-Henne bump functions. Let $y_u$ and $y_l$ be
the upper and lower ordinates of the profile, respectively. The deformation of
such coordinates is described as follows
\begin{equation}
  y_u = \overline{y_u} + \sum_{i=1}^{5} c_i r_i , \qquad
  y_l = \overline{y_l} - \sum_{i=1}^{5} d_i r_i ,
\end{equation}
where the bar denotes the reference undeformed state. The parameters $\mupar \in
\mathbb{D} \subset \mathbb{R}^{10}$ are the weights coefficients, $c_i$ and
$d_i$,  associated with the shape functions $r_i$. In particular we set
$\mathbb{D} := [0, 0.03]^{10}$. The explicit formulation of the shape functions
can be found in~\cite{hicks1978wing}. For this datasets, the Reynolds number is
$Re=50000$. The time step is $dt=10^{-3}$~s. For other specifics regarding the
solver employed and the numerical method adopted
we refer to~\cite{tezzele2020enhancing}.

As outputs we considered the values of the lift coefficient, every $15$ time
steps from $100$~ms to $30000$~ms, for a total of $1994$ components. Even in this
case the output is classified with \cref{alg:classification} with distance
defined in \cref{def:local as dimension}. The values of the lift coefficient
physically interesting are collected at last, after an initialization phase.
Nonetheless for the purpose of having a vectorial output we considered its value
from the time instant $100$~ms. The procedure finds two classes and splits the
ordered output components in two parts: from the component $0$ to $996$, the
local AS dimension is 1, for the remaining time steps it is higher. So we can
expect an improvement on the efficiency of the reduction in the parameter space
when considering separately these two sets of outputs components as
\cref{fig:mortech_r2} shows. The weighted local AS dimension is in fact lower
when using clustering, for every minimum $R^2$ threshold.

\begin{figure}[h]
\centering
\includegraphics[width=.83\textwidth]{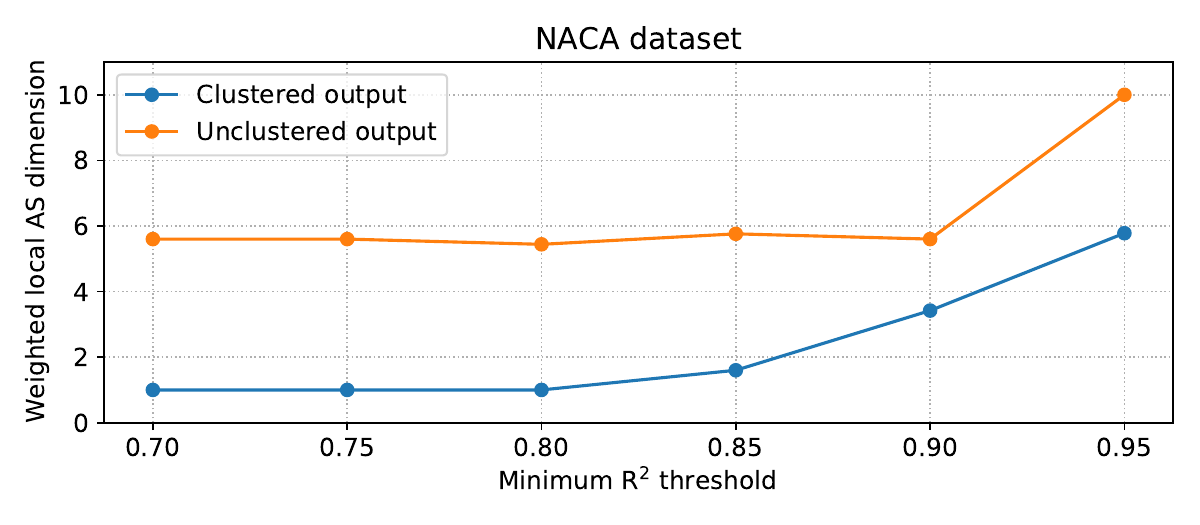}
\caption{In orange the local AS dimensions weighted on the number of elements of
each of the $2$ clusters in the parameter space, obtained with hierarchical
top-down clustering. In blue the local AS dimensions weighted on the number of
elements of each of the $2$ clusters in the parameter space, obtained with
hierarchical top-down clustering, times $2$ clustered outputs for a total of $4$
terms in the weighted average.\label{fig:mortech_r2}}
\end{figure}

\section{Conclusions and perspectives}
\label{sec:the_end}
In this work we present a new local approach for parameter space reduction which
exploits supervised clustering techniques, such as K-means, K-medoids, and
hierarchical top-down, with a distance metric based on active subspaces. We
called this method local active subspaces (LAS). The proposed metric
tend to form the clusters transversally with respect to the active
subspace directions thus reducing the approximation error induced by
the choice of the non-optimal profile.

The theoretical formulation provides error estimates for the construction of
response surfaces over the local active subspaces. We also present a
classification approach to capture the optimal AS dimension for each cluster and
can be used as a preprocessing step, both for the inputs and the vectorial
outputs, for the construction of more accurate regressions and surrogate
modeling. The proposed approach is very versatile, especially the hierarchical
top-down clustering which can incorporate quite different criteria. The
methodology has been validated over a vast range of datasets, both scalar and
vector-valued, showing all the strengths and a possible weakness, in case of
radial symmetric functions. In all the test cases LAS achieved superior
performance with respect to the classical global approach.

Possible future lines of research can focus on the study of the extension of
this methods to nonlinear parameter space reduction techniques, or on the use of
more advanced clustering criteria.

\appendix
\setcounter{equation}{0}
\renewcommand\theequation{\arabic{equation}}
\section{Appendix}
\label{sec:appendix}
\subsection{Subspace Poincar\'e inequality}
\label{app:assumptions}
  The probabilistic Poincar\'e inequality for conditional probability
  densities or subspace Poincar\'e inequality~\cite{parente2020generalized} is
  valid at least for the following classes of absolutely continuous
  probability densities $\mupar$ with p.d.f. $\rho$.
  \begin{ass}
      \label{ass:pdf}
      The p.d.f $\rho:\mathcal{X}\subset\mathbb{R}^{n}\rightarrow\mathbb{R}$
      satisfies one of the following:
      \begin{enumerate}
          \item $\mathcal{X}$ is bounded connected open with Lipschitz
                boundary, $\rho$ is the uniform density distribution.
          \item $\mathcal{X}$ is convex and bounded, $\exists \,\delta,
                    D>0:\,0<\delta\leq\lVert\rho(\mathbf{x})\rVert_{L^{\infty}}\leq
                    D<\infty\ ,\forall \xx \in \mathcal{X}$,
          \item $\mathcal{X}=\mathbb{R}^{n}$, $\rho(\xx)\sim\exp(-V(\xx))$ where
                $V:\mathbb{R}^{n}\rightarrow
                (-\infty,\infty]\,,V\in\mathcal{C}^{2}$ is $\alpha$-uniformly
                convex,
                \begin{gather}
                    \mathbf{u}^{T}\text{Hess}(V(\xx))\mathbf{u}\geq\,\alpha
                    \lVert\mathbf{u}\rVert^{2}_{2},\quad\forall \xx,
                    \mathbf{u} \in \RR^n,
                \end{gather}
                where $\text{Hess}(V(\xx))$ is the Hessian of $V(\xx)$.
          \item $\mathcal{X}=\mathbb{R}^{n}$, $\rho(\xx)\sim\exp(-V(\xx))$ where
                $V$ is a convex function. In this case we require also $f$
                Lipschitz continuous.
      \end{enumerate}
  \end{ass}
  The last class of p.d.f. provides a weaker bound~(Lemma 4.3,
  \cite{parente2020generalized}) on the ridge approximation error. For the
  previous classes $i\in\{1, 2, 3, 4\}$ of p.d.f. an upper bound of the
  Poincar\'e constant $C_{P, i}$ is also provided:
  \begin{equation}
      C_{P, 1} = C_{P, 1}(\Omega),\qquad C_{P, 2}=\frac{D\text{diam}(\mathcal{X})}{\pi\delta}, \qquad C_{P, 3}=\frac{1}{\alpha},
  \end{equation}
  while the upper bound for $C_{P, 4}$ requires the definition of other
  quantities and is proved in Lemma 4.4~\cite{parente2020generalized}.

\subsection{Generalization of the upper bound on the approximation of the active subspace}
\label{app:upperbound approx AS}
  We want to make some brief considerations about the accuracy of the active
    subspace as eigen subspace of the correlation matrix approximated with Monte
    Carlo. If we use the notation $W_1\in\mathbb{R}^{n\times r},
    W_2\in\mathbb{R}^{n\times (n-r)}$ for the active and inactive subspaces
    (i.e. $P_r = W_1 W_{1}^{T},\, Id-P_r=W_2 W_{2}^{T}$) and
    $\hat{W_1}\in\mathbb{R}^{n\times r}, \hat{W_2}\in\mathbb{R}^{n\times (n-r)}$
    for the approximated active and inactive subspaces, we can bound the
    approximation error as done by Constantine in~\cite{constantine2015active}:
    assuming $f$ Lipschitz continuous, with high probability the following
    inequality is valid,
    \begin{equation}
        \label{eq:Constantine approx error}
        \text{dist}(\text{Im}(W_1), \text{Im}(\hat{W}_1))\lesssim
        \frac{4L\sqrt{n}(\log(n))^{\frac{1}{2}}}{N^{\frac{1}{2}}\lambda_{1}(\lambda_{r}-\lambda_{r+1})},
    \end{equation}
    where $L$ is the Lipschitz constant of $f$, $\{\lambda_1, \dots\lambda_n\}$
    are the non-negative eigenvalues of
    $\mathbb{E}_{\boldsymbol{\mu}_{i}}\left[\nabla f\otimes\nabla f\right]$
    ordered decreasingly, and $N$ is the number of Monte Carlo samples.

    The bound in \cref{eq:Constantine approx error} is obtained from Corollary
    3.8 and Corollary 3.10 in~\cite{constantine2015active}. It is founded on a
    matrix Bernstein inequality for a sequence of random uniformly bounded
    matrices (Theorem 6.1,~\cite{tropp2012user}) and on the Corollary 8.1.11
    from~\cite{golub2013matrix} that holds a bound on the sensitivity of
    perturbation of an invariant subspace.

    From Corollary 8.1.11 of~\cite{golub2013matrix}, a bound on the
    approximation error of the active subspace $W_1$ can be obtained
    making explicit
    $\lVert W^{T}_{2}EW_1\rVert_{F}$ with respect to the chosen numerical method
    for the discretization $\hat{C}$ of the integral
    $C=\mathbb{E}_{\boldsymbol{\mu}_{i}}\left[\nabla f\otimes\nabla f\right]$:
    in~\cite{constantine2015active} this has been done for the Monte Carlo
    method. In practice we could use quasi Monte Carlo sampling methods with
    Halton or Sobol' sequences~\cite{sullivan2015introduction}, since
    \begin{align*}
        \lVert W^{T}_{2}EW_1\rVert_{F} & \leq \sqrt{r(n-r)}\lVert W^{T}_{2}EW_1\rVert_{\text{max}} \\
        &\lesssim
        \sqrt{r(n-r)}D^{*}(\{x_i\}_i)\cdot\text{max}_{i,j\in\{1,\dots,n\}}(V^{\text{HK}}(\nabla
        f_i\nabla f_j))                                                                                     \\
                                       & \lesssim
        2\sqrt{r(n-r)}D^{*}(\{x_i\}_i)\cdot\text{max}(\vert f \vert)\cdot\text{max}_{i\in\{1,\dots,n\}}(V^{\text{HK}}(\nabla
        f_i))                                                                                               \\
                                       & \lesssim
        2\sqrt{r(n-r)}\cdot\text{max}_{i\in\{1,\dots,n\}}(V^{\text{HK}}(\nabla
        f_i))\frac{\log(N)^{n}}{N},
    \end{align*}
    where $V^{\text{HK}}$ is the Hardy--Krause variation and $D^{*}(\{x_i\}_i)$
    is the star discrepancy of the quasi random sequence $\{x_{i}\}_i$. For the
    above result we have imposed $\mathcal{X}=[0,1]^{n}$ but it can be extended
    to different domains~\cite{basu2016transformations}. Thus we obtain the
    bound
    \begin{align}
        \label{eq:quasi MC}
        \text{dist}(\text{Im}(W_1), \text{Im}(\hat{W}_1))&\lesssim
        \frac{4\lVert W^{T}_{2}EW_1\rVert_{F}}{\lambda_{r}-\lambda_{r+1}}\nonumber\\
        &\lesssim
        \frac{8L\sqrt{r(n-r)}\cdot\text{max}_{i\in\{1,\dots,n\}}(V^{\text{HK}}(\nabla
        f_i))}{\lambda_{r}-\lambda_{r+1}}\cdot\frac{\log(N)^{n}}{N}.
    \end{align}
    Other numerical integration rules can be chosen so that different regularity
    conditions on the objective function may appear on the upper bound of the
    error, as the Lipschitz constant on \cref{eq:Constantine approx error} or
    the Hardy--Krause variation on \cref{eq:quasi MC}. If the regularity of $f$
    is $\mathcal{C}^{s}$, we can also apply tensor product quadrature formulae
    or Smolyak's sparse quadrature rule~\cite{sullivan2015introduction}. For
    high-dimensional datasets and $f$ less regular, the estimate in
    \cref{eq:Constantine approx error} is the sharpest.


\subsection{Proof of \cref{cor:counterexample refinement}}
\label{sec:proof_counter}
\begin{proof} Let us use the notation
$h_{1}(x_{1}):=x_{1}(x_{1}+\epsilon)(x_{1}-\epsilon)$, and
$h_{2}(x_{2}):=\cos(\omega x_{2})$, it can be shown that
\begin{align*}
    \mathbb{E}_{\mu}\left[\nabla f\otimes\nabla f\right] &= \int_{B}
    \left(
        \begin{matrix}
        (h_{1}')^{2}(h_{2})^{2} & h_{1}h_{1}'h_{2}h_{2}'\\
        h_{1}h_{1}'h_{2}h_{2}' & (h_{1})^{2}(h_{2}')^{2}
    \end{matrix}
    \right) \, d\mu(\mathbf{x}) + \mu(A\cup C)\cdot\left(
            \begin{matrix}
                1 & 0\\
                0 & 0
        \end{matrix}\right)\\
        &=
        \left(
            \begin{matrix}
                \frac{2}{5}\epsilon^{5}\left(1+\frac{\sin(2\omega)}{2\omega}\right)& 0\\
            0 & \frac{4}{105}\omega^{2}\epsilon^{7}\left(1-\frac{\cos(2\omega)}{2\omega}\right)
        \end{matrix}
        \right)+ \mu(A\cup C)\cdot\left(
                \begin{matrix}
                    1 & 0\\
                    0 & 0
            \end{matrix}\right),
\end{align*}
thus, since we are considering a one dimensional active subspace, the active
eigenvector belongs to the set $\{(1, 0), (0, 1)\}$. Similarly we evaluate
\begin{align*}
    \mathbb{E}_{\mu_{B}}\left[\nabla f\vert_{B}\otimes\nabla f\vert_{B}\right] &=
        \left(
            \begin{matrix}
                \frac{8}{5}\epsilon^{4}\left(1+\frac{\sin(2\omega)}{2\omega}\right)& 0\\
            0 & \frac{16}{105}\omega^{2}\epsilon^{6}\left(1-\frac{\cos(2\omega)}{2\omega}\right)
        \end{matrix}
        \right),\\
    \mathbb{E}_{\mu_{A}}\left[\nabla f\vert_{A}\otimes\nabla
   f\vert_{A}\right] &= \mathbb{E}_{\mu_{C}}\left[\nabla
                               f\vert_{C}\otimes\nabla
                               f\vert_{C}\right] =
        \left(
            \begin{matrix}
                1 & 0\\
            0 & 0
        \end{matrix}
        \right),
\end{align*}
and conclude that there exist $\epsilon>0, \omega>0$ such that:
\begin{align}
    \frac{2}{5}
  \epsilon^{5}\left(1+\frac{\sin(2\omega)}{2\omega}\right) +
  4(1-\epsilon) \geq
  \frac{4}{105}\omega^{2}\epsilon^{7}\left(1-\frac{\cos(2\omega)}{2\omega}\right)\label{eq:condition1}
  ,\\
    \frac{8}{5}\epsilon^{4}\left(1+\frac{\sin(2\omega)}{2\omega}\right)\leq
  \frac{16}{105}\omega^{2}\epsilon^{6}\left(1-\frac{\cos(2\omega)}{2\omega}\right)\label{eq:condition2},
\end{align}
for example $\epsilon\sim 10^{-2}, \,\omega\sim 10^{4}$ (approximately
$10\epsilon^{-2}\leq\omega^2\leq10\epsilon^{-7}$). In this way, using the
notations of \cref{def:local ridge approximation}, we have
\begin{equation*}
P_{1, \mathcal{X}} = e_{1}\otimes e_{1},\quad P_{1, A}=P_{1, C}=e_{1}\otimes e_{1},\quad P_{1, B}=e_{2}\otimes e_{2},
\end{equation*}
and it follows that
\begin{equation*}
    \mathbb{E}_{\mu}\left[\|f-R_{AS}(r,
      f)\|^{2}\right]=\mathbb{E}_{\mu}\left[ f^{2}\vert_{B}\right]=(1/\mu(\mathcal{X}))\lVert
    h_{1}\rVert^{2}_{L^{2}(\mathcal{X}, \lambda)}\lVert
    h_{2}\rVert^{2}_{L^{2}(\mathcal{X}, \lambda)},
\end{equation*}
\begin{align*}
    \mathbb{E}_{\mu}\left[\|f-\mathbb{E}_{\mu}\left[f \vert P_{r}\right]\|^{2}\right]&=(1/\mu(\mathcal{X}))\lVert h_{1}\rVert^{2}_{L^{2}(\mathcal{X},
                      \lambda)}\lVert h_{2}-(1/\mu(\mathcal{X}))\int
                      h_{2}dx_{2}\rVert^{2}_{L^{2}(\mathcal{X},
                      \lambda)}\\
    &=(1/\mu(\mathcal{X}))\lVert h_{1}\rVert^{2}_{L^{2}(\mathcal{X},
      \lambda)}\left(\lVert h_{2}\rVert^{2}_{L^{2}(\mathcal{X},
      \lambda)}-\frac{7}{16}\left(\int h_{2}dx_{2}\right)^{2}\right),
\end{align*}
where $\lambda$ is the Lebesgue measure.
\end{proof}

\subsection{Computational complexity of hierarchical top-down}
\label{sec:complexity}

In \cref{tab:complexity} we report the computational complexity of the
hierarchical top-down clustering algorithm. We report the costs
divided by level of refinement.

\begin{table}[h]
    \caption{Computational complexity of hierarchical top-down
    clustering.\label{tab:complexity}}
    \centering
    \renewcommand{\arraystretch}{1.2}
    \begin{tabular}{ccc}
        \hline \hline
        Step      & Cost                 & Description \\
        \hline\hline
        Root      & $O(Nnp^2+Nn^2p+n^3)$ & AS          \\
                  & $O(N^3p^3)$          & GPR         \\
        \hline
        First refinement: & $O(N(N-k)^2)$                    & K-medoids \\
        $k$ from $m$ to $M$  & $O((N/k)np^2+(N/k)n^2p+n^3)$                   &
        AS \\
          & $O((N/k)^3p^3)$                    & GPR          \\
        \hline
        Intermediate refinements & - & - \\
        \hline
        Last refinement: & $O((N/k^{l-1})((N/k^{l-1})-k)^2)$ & K-medoids \\
        $k$ from $m$ to $M$  & $O((N/k^l)np^2+(N/k^l)n^2p+n^3)$ & AS \\
        for each one of the $m^{l-1}$ clusters  & $O((N/k^l)^3p^3)$ & GPR \\
        \hline \hline
    \end{tabular}
\end{table}

\section*{Acknowledgements}
This work was partially supported by an industrial Ph.D. grant sponsored by
Fincantieri S.p.A. (IRONTH Project), by MIUR (Italian ministry for university
and research) through FARE-X-AROMA-CFD project, and partially funded by European
Union Funding for Research and Innovation --- Horizon 2020 Program --- in the
framework of European Research Council Executive Agency: H2020 ERC CoG 2015
AROMA-CFD project 681447 ``Advanced Reduced Order Methods with Applications in
Computational Fluid Dynamics'' P.I. Professor Gianluigi Rozza.

\bibliographystyle{abbrvurl}


\end{document}